\pgfplotsset{compat=1.18}
\setlist[itemize]{leftmargin=*}
\newcommand{\CG}{\mathcal{G}\xspace}
\newcommand{\CV}{\mathcal{V}\xspace}
\newcommand{\CE}{\mathcal{E}\xspace}
\newcommand{\name}{\textsc{Eugene}\xspace}
\newcommand{\madam}{\textsc{M-Adam}\xspace}
\newcommand{\greed}{\textsc{Greed}\xspace}
\newcommand{\gedgnn}{\textsc{GedGnn}\xspace}
\newcommand{\gennastar}{{Genn-A*}\xspace}
\newcommand{\eric}{\textsc{Eric}\xspace}
\newcommand{\egsc}{\textsc{Egsc}\xspace}
\newcommand{\graphedx}{\textsc{GraphEdx}\xspace}
\newcommand{\gmn}{\textsc{Gmn-Embed}\xspace}
\newcommand{\hmn}{\textsc{H2mn}\xspace}
\newcommand{\graphsim}{\textsc{GraphSim}\xspace}
\newcommand{\gotsim}{\textsc{GOTSim}\xspace}
\newcommand{\simgnn}{\textsc{SimGNN}\xspace}
\newcommand{\btight}{\textsc{Branch-Tight}\xspace}
\newcommand{\adjip}{\textsc{Adj-Ip}\xspace}
\newcommand{\lpged}{\textsc{F1}\xspace}
\newcommand{\compact}{\textsc{Compact-MIP}\xspace}
\newcommand{\ipfp}{\textsc{IPFP}\xspace}
\newcommand{\gmsm}{\textsc{GMSM}\xspace}
\newcommand{\fugal}{\textsc{Fugal}\xspace}
\newcommand{\aids}{AIDS\xspace}
\newcommand{\molhiv}{Molhiv\xspace}
\newcommand{\code}{Code2\xspace}
\newcommand{\mutag}{Mutag\xspace}
\newcommand{\imdb}{IMDB\xspace}
\newcommand{\coil}{COIL-DEL\xspace}
\newcommand{\triangles}{Triangles\xspace}
\newcommand{\nodeins}{\ensuremath{v_\mathsf{ins}}\xspace}
\newcommand{\nodedel}{\ensuremath{v_\mathsf{del}}\xspace}
\newcommand{\edgeins}{\ensuremath{e_\mathsf{ins}}\xspace}
\newcommand{\edgedel}{\ensuremath{e_\mathsf{del}}\xspace}
\newcommand*{\NPhard}{$\mathbf{NP}$-hard\xspace}
\newcommand*{\NPhardness}{$\mathbf{NP}$-hardness\xspace}
\newcommand*{\APXhard}{$\mathbf{APX}$-hard\xspace}
\newcommand{\pkl}[1]{#1}
\newcommand{\pke}[1]{#1}
\newcommand{\pkn}[1]{#1}
\newcommand{\pk}[1]{#1}
\newcommand{\pki}[1]{#1}
\newcommand{\iclr}[1]{#1}
\newcommand{\nips}[1]{#1}
\newtheorem{lemma}{Lemma}
\newtheorem{definition}{Definition}
\renewcommand{\cite}{\citep} % prit parentheses by default
\title{\name: Explainable Structure-aware Graph Edit Distance Estimation with Generalized Edit Costs} 
\author{Aditya Bommakanti \\
Department of Computer Science\\
IIT Delhi \\
\texttt{adityabommakanti2002@gmail.com} \\
\And
Harshith Reddy Vonteri \\
Department of Computer Science \\
IIT Delhi \\\
\texttt{harshithreddyvonteri@gmail.com} \\
\AND
Sayan Ranu \\
Department of Computer Science\\
IIT Delhi \\
\texttt{sayanranu@cse.iitd.ac.in}
\AND
Panagiotis Karras \\
Department of Computer Science\\
University of Copenhagen \& Aarhus University \\
\texttt{piekarras@gmail.com}
}
\begin{document}
% \twocolumn[
% \icmltitle{\name: Explainable Unsupervised Approximation of Graph Edit Distance with Generalized Edit Costs}
% \icmlsetsymbol{equal}{*}
\maketitle

% \begin{icmlauthorlist}
% \icmlauthor{Firstname1 Lastname1}{equal,yyy}
% \icmlauthor{Firstname2 Lastname2}{equal,yyy,comp}
% \icmlauthor{Firstname3 Lastname3}{comp}
% \icmlauthor{Firstname4 Lastname4}{sch}
% \icmlauthor{Firstname5 Lastname5}{yyy}
% \icmlauthor{Firstname6 Lastname6}{sch,yyy,comp}
% \icmlauthor{Firstname7 Lastname7}{comp}
% %\icmlauthor{}{sch}
% \icmlauthor{Firstname8 Lastname8}{sch}
% \icmlauthor{Firstname8 Lastname8}{yyy,comp}
% %\icmlauthor{}{sch}
% %\icmlauthor{}{sch}
% \end{icmlauthorlist}

% \icmlaffiliation{yyy}{Department of XXX, University of YYY, Location, Country}
% \icmlaffiliation{comp}{Company Name, Location, Country}
% \icmlaffiliation{sch}{School of ZZZ, Institute of WWW, Location, Country}

% \icmlcorrespondingauthor{Firstname1 Lastname1}{first1.last1@xxx.edu}
% \icmlcorrespondingauthor{Firstname2 Lastname2}{first2.last2@www.uk}

% % You may provide any keywords that you
% % find helpful for describing your paper; these are used to populate
% % the "keywords" metadata in the PDF but will not be shown in the document
% \icmlkeywords{Machine Learning, ICML}
% ]

% \printAffiliationsAndNotice{}
% \title[\name]{}
% \input{00.authors}
\begin{abstract}
The need to identify graphs with small structural distances from a query arises in domains such as biology, chemistry, recommender systems, and social network analysis. Among several methods for measuring inter-graph distance, Graph Edit Distance (GED) is preferred for its comprehensibility, though its computation is hindered by \NPhardness. Optimization based heuristic methods often face challenges in providing accurate approximations. State-of-the-art GED approximations predominantly utilize neural methods, which, however:
\begin{enumerate*}[label=(\roman*)]
    \item lack an \emph{explanatory} edit path corresponding to the approximated GED;\label{item:nopath}
    \item require the NP-hard generation of ground-truth GEDs for training; and\label{item:generation}
    \item necessitate separate training on each dataset.\label{item:training}
\end{enumerate*}
In this paper, we propose \name, an efficient, algebraic, \pke{and structure-aware} optimization based method that estimates GED and also provides edit paths corresponding to the estimated cost. Extensive experimental evaluation demonstrates that \name achieves state-of-the-art GED estimation with superior scalability across diverse datasets and generalized cost settings.
\end{abstract}
%\input{00.ccs_keywords}
%\maketitle
% \maketitle
%\input{sections/00.abstract}
\vspace{-0.1in}
\section{Introduction and Related work}\label{sec:intro}
\vspace{-0.1in}
\pkl{\emph{Graph Edit Distance (GED)} quantifies the dissimilarity between a pair of graphs~\cite{graphsim,graphotsim,simgnn,ranjan&al22}. It finds application in identifying the graph in a collection most similar to a query graph. Given graphs~$\mathcal{G}_1$ and~$\mathcal{G}_2$, GED is the minimum cost to transform~$\mathcal{G}_1$ into~$\mathcal{G}_2$ through \emph{edit operations}, rendering~$\mathcal{G}_1$ isomorphic to~$\mathcal{G}_2$. These operations comprise the addition and deletion of edges and nodes and the replacement of their labels, each linked to a cost. \Cref{fig:ged_example} presents an example. GED computation is \NPhard~\cite{ar:1} and \APXhard~\cite{apxhard}, hence a challenging task.}
%\pki{Moreover, GED is APX-hard~\cite{Lin1994HardnessOA}, rendering it inapproximable within polynomial time.}
%\looseness=-1 % no need now.

\begin{figure}[H]
\centering
\vspace{-2mm}
\includegraphics[width=3.3in]{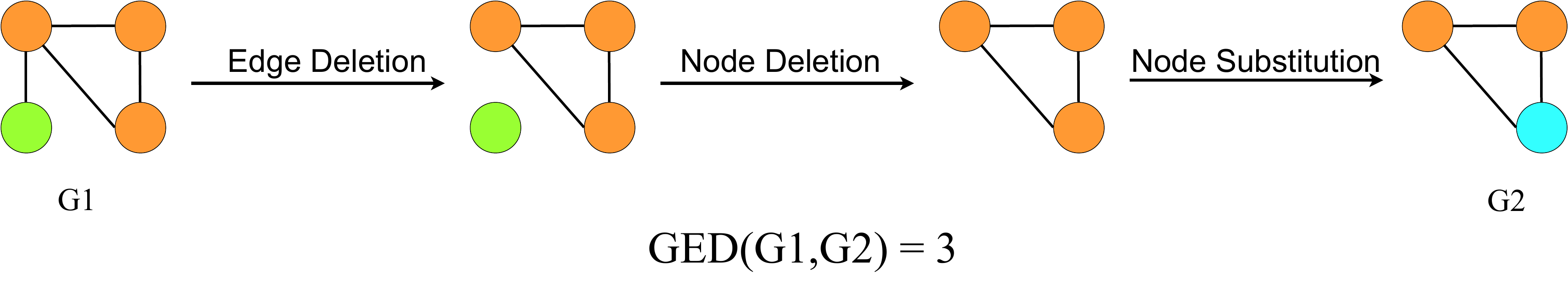}
\vspace{-2mm}
\caption{\pkl{An \emph{edit path} between graphs~$\mathcal{G}_1$ and~$\mathcal{G}_2$ with GED~3; each edit operation costs~1.}}\label{fig:ged_example}
\vspace{-4mm}
\end{figure}

\pke{Owing to the problem's hardness, several algorithms approximate GED~\cite{Blumenthal}. \textit{Optimization} based heuristic GED estimation methods employ strategies such as transformations to the linear-sum assignment problem with error correction or constraints (e.g., \textsc{NODE}~\cite{adjip}, \btight~\cite{btight}) and linear-programming relaxations of mixed integer programming (\textsc{MIP}) formulations (e.g., \lpged~\cite{LEROUGE2017254}, \adjip~\cite{adjip}, \compact~\cite{blumenthal2020exact}). Still, these approaches often afford only limited approximation accuracy.}
%, motivating the development of neural architectures aimed at improving GED prediction quality.
%and index structures~\cite{ctree,stars,ged1,ged2} 
%have been designed in the literature. 
%\noindent\textbf{Unsupervised Methods:} \pk{One approach~\cite{ar:3} formulates the exact GED computation as a binary linear programming problem. Another method, \textsc{Branch}~\cite{ar:4}, uses the linear-sum assignment problem with error-correction (\textsc{Lsape}) to process the search space and heuristically compute GED, achieving a good tradeoff between accuracy and time. Bipartite-matching methods map nodes and local structures among two graphs~\cite{ar:2,pr:1} by solving an \emph{assignment problem}. \pki{However, currently, unsupervised methods that target the exact GED have limited scalability, while those that seek sub-optimal solutions suffer from limited accuracy.}}

%\textbf{Supervised Methods:} 
\pke{Recent works have evinced that graph neural networks (GNNs) can achieve state-of-the-art accuracy in approximating GED~\cite{graphedx, ranjan&al22, WangCVPR21, simgnn, graphsim, graphotsim, icmlged, h2mn, gedgnn}. The general pipeline in this paradigm is to train a GNN-based architecture on a set of graph pairs along with their true GED distance. Some techniques also require the node mapping corresponding to the GED~\cite{gedgnn,WangCVPR21}.}

%In this approach, a GNN-based architecture is developed to learn a prediction model from a training set of graph pairs and their distances, enabling the prediction of GED for unseen pairs. 
%Among the prominent algorithms in this domain, \textsc{Greed}~\cite{ranjan&al22} utilizes \pki{siamese} GNNs with a specific inductive bias to learn GED while maintaining its metric properties. \textsc{Genn-A*}~\cite{WangCVPR21} combines GNNs with the A* algorithm to predict distances and generate edit paths for graphs up to 10 nodes in size. \textsc{H$^2$MN}~\cite{h2mn} leverages a hierarchical hypergraph matching network to learn graph similarity. Other methods, including \textsc{SimGNN}~\cite{simgnn}, \textsc{GotSIM}~\cite{graphotsim}, \textsc{GraphSIM}~\cite{graphsim}, and \textsc{GMN}~\cite{icmlged}, also employ GNNs to estimate GED.

\pke{Although they afford superior accuracy, neural approaches suffer from notable drawbacks:}

\begin{itemize}
\item \textbf{Reliance on \NPhard ground truth}: \pke{Generating training data, i.e., \textit{true} GEDs of graph pairs, is prohibitively costly for large graphs, as GED computation is \NPhard. Training data are thus limited to graphs of at most~$25$ nodes, undermining generalizability to larger ones (\S~\ref{sec:exp}).}
\item \textbf{Lack of interpretability}: \pki{\pkn{Most of them} furnish a GED between two graphs \emph{but not} an edit path that entails it;} \pke{such edit paths reveal crucial functions of protein complexes~\cite{biology1}, image alignment~\cite{cv1}, and gene regulatory pathways~\cite{biology3}. Some neural methods, e.g., \textsc{GedGnn}~\cite{gedgnn} and \textsc{Genn}-$A^*$~\cite{WangCVPR21} offer interpretability, albeit at the expense of accuracy and/or scalability, as we show in \S~\ref{sec:exp}.}
\item \textbf{Lack of generalizability}: \pke{Neural approximators do not generalize across datasets. For datasets across different domains (such as chemical compounds vs. function-call graphs), the node label set changes. As the number of parameters in a GNN is a function of the feature dimension in each node, a GNN trained on one domain cannot transfer to another,
%Even when two datasets belong to the same domain, as we will show later in \S~\ref{app:generalizability}, due to distribution shifts in structure and node label distributions, prediction accuracy suffers. 
necessitating \emph{separate} training for each dataset. As training data generation is \NPhard, the pipeline is resource-intensive.}
%\vspace{-0.05in}
%\item {\bf Environmental impact:} Neural solvers impose intense computational resource requirements, leading to increased greenhouse gas emissions, due to their reliance on GPUs for both training and inference. %This dependency results in high energy consumption and  %As AI deployments continue to scale, these environmental implications become increasingly critical, highlighting the need for more efficient and sustainable approaches in machine learning.
\end{itemize}
%\looseness=-1 % no need any more.

\pk{In this paper, we present an optimization based algebraic method called \name: \underline{E}xplainable Str\underline{u}cture-aware \underline{G}raph \underline{E}dit Dista\underline{n}c\underline{e}, which achieves state-of-the-art accuracy and is:
\begin{enumerate*}[label=\textbf{(\arabic*)}, itemjoin={{; }}, itemjoin*={{; and }}]
\item \textit{optimization} based heuristic, hence does not require training
\item \textit{CPU-bound}, therefore unshackled from GPU requirements and resultant greenhouse emissions
\item \textit{interpretable}.
\end{enumerate*}
%Finally, the output of \name is the edit path, from which the GED is computed. Hence, it is fully \textit{interpretable}. 
The innovations empowering these properties are as follows:
% \vspace{-0.1in}
\begin{itemize}
%\item \textbf{Equivalence of GED with graph alignment:} We establish a fundamental connection between two pivotal graph theory problems, namely Unrestricted Graph Alignment (UGA) and Graph Edit Distance (GED), by proving that an UGA corresponds to a specific instance of GED.
\vspace{-0.05in}
\item \textbf{Optimization problem formulation:} We cast the GED computation problem as an optimization problem extending over Unrestricted Graph Alignment (UGA), \pke{grounded on adjacency matrices}, over the space of all possible node alignments, represented via \textit{permutation matrices}; \pki{this formulation facilitates an optimization based solution}, eschewing the need for ground-truth data generation and data-specific training.
\vspace{-0.05in}
%\item \textbf{Interpretability with convergence guarantee:} To approximate GED, \name minimizes a function over the set of \emph{doubly stochastic} matrices, leading to a convex optimization problem that can be solved by \textsc{Adam}~\cite{kingma14}. We further refine the approximation by exhorting the doubly stochastic matrix to obtain a \emph{quasi-permutation matrix} form. By operating directly on matrices, \name yields a GED approximation \emph{explainable} via a node-to-node correspondence. We also devise a customized parameter optimization strategy that guarantees convergence.
\item \textbf{Interpretability:} To approximate GED, \name minimizes a function over the set of \emph{doubly stochastic} matrices, leading to a convex optimization problem that can be solved by \textsc{Adam}~\cite{kingma14}. We further refine the approximation by exhorting the doubly stochastic matrix \iclr{using permutation inducing regularizers and inverse relabelling strategy}. By operating directly on matrices, \name yields a GED approximation \emph{explainable} via a node-to-node correspondence.% We also devise a customized parameter optimization strategy that guarantees convergence.
%\item \textbf{Convergence guarantee:} Optimizing over a quasi-permutation matrix renders the problem non-convex. Therefore, we devise a customized parameter optimization algorithm for \name that guarantees convergence.
%\looseness=-1 % no need.
% \item \textbf{Rounding Algorithms for Node Alignment:} We incorporate two rounding algorithms to generate a permutation matrix that represents the node alignment between graphs. The resulting node alignment yields the approximate GED, thus providing an interpretable solution to the problem.
\vspace{-0.05in}
\item \textbf{Experimental evaluation:} Extensive experiments encompassing 15 state-of-the-art baselines over 9 datasets and 3 combinations of edits costs establish that \name consistently achieves superior accuracy in GED approximation. Notably, \name, does not rely on training data and thus offers a resource-efficient, GPU-free execution pipeline, which exhibits up to 30 times lower carbon emissions than its neural counterparts.
\end{itemize}}
\vspace{-0.1in}
\vspace{-0.1in}
\section{Preliminaries and Problem Formulation}\label{sec:notation}
%In this section, we introduce the concepts central to our work and formulate the problem of approximating graph edit distance.
\vspace{-0.1in}
\begin{definition}[Graph]
A node-labeled undirected graph is a triple~$\mathcal{G}(\mathcal{V}, \mathcal{E}, \mathcal{L})$ where~$\mathcal{V} = [n] \equiv\{1,\ldots,n\}$ is the node set, $\mathcal{E} \subseteq [n]\times[n]$ is the edge set, and~$\mathcal{L}: \mathcal{V} \rightarrow \Sigma$ is a labeling function that maps nodes to labels, where~$\Sigma$ is the set of all labels.
\end{definition}
\vspace{-2mm}
The \textit{adjacency matrix} of~$\mathcal{G}$ is~$A = [a_{i,j}]_{i,j \in [n]} \in \{0,1\}^{n \times n}$ such that~$a_{ij} = a_{ji} = 1$ if and only if~$(i,j)\in E$. We use~$\mathbf{1}$ to denote an all-ones vector, $J$ to denote an all-ones square matrix, and~$O$ to denote an all-zero square matrix.
%We let dimensions be inferred from the employing equations.

\begin{definition}[Permutation and Doubly Stochastic Matrices]\label{def:matrix}
A permutation matrix of size~$n$ is a binary-valued matrix~$\mathbb{P}^{n} = \{P \in \{0, 1\}^{n \times n} : P\mathbf{1} = \mathbf{1}, P^{T}\mathbf{1} = \mathbf{1}\}$. A doubly stochastic matrix of size~$n$ is a real-valued matrix~$\mathbb{W}^{n} = \{W \in [0, 1]^{n \times n} : W\mathbf{1} = \mathbf{1}, W^{T}\mathbf{1} = \mathbf{1}\}$.
\end{definition}

We define a $\emph{quasi-permutation matrix}$ as a matrix that is \textit{almost} a permutation matrix.
\begin{definition}[Entry-wise norm] Let~$A \!=\! [a_{ij}]_{i,j \in [n]} \!\in\! \mathbb{R}^{n \times n}$ and $p \!\in\! \mathbb{N}^+ \cup \{\infty\}$. We define the entry-wise $p$-norm of $A$ as~$\|A\|_p = \left( \sum_{i=1}^n \sum_{j=1}^n |a_{ij}|^{p} \right)^{\sfrac1p}$ for $p \!\in\! \mathbb{N}+$, and~$\|A\|_\infty \!=\! \max_{i,j} |a_{i,j}|$. We denote the entry-wise $2$-norm (i.e., the  \emph{Frobenius} norm) as $\|\cdot\|_F$.
\end{definition}

We denote the \textit{trace} of a matrix~$A$ as~$tr(A)$.

\begin{definition}[Node mapping]\label{def:mapping}
Given two graphs~$\CG_1$ and~$\CG_2$ of~$n$ nodes, a \textit{node mapping} between~$\CG_1$ and~$\CG_2$ is a bijection~$\pi: \CV_1 \rightarrow \CV_2$ where~$\forall v \in \CV_1, \pi(v)\in \CV_2$.
%; \textbf{(ii)} $\forall e=\langle v_1,v_2\rangle \in \CE_1, \langle \pi(v_1),\pi(v_2)\rangle\in \CE_2$.
\end{definition}
\vspace{-1mm}
\pke{Given graphs~$\mathcal{G}_{1}$ and~$\mathcal{G}_{2}$ with node counts~$n_1$ and~$n_2$, respectively, $n_1 \!<\! n_2$, we add~$(n_2 \!-\! n_1)$ isolated \textit{dummy} nodes with label~$\epsilon$ to~$\mathcal{G}_1$. Henceforward, we assume the two given graphs are of the same size.}

\begin{definition}[Graph Edit Distance under mapping $\pi$]\label{def:gedmap}
GED between $\CG_1$ and $\CG_2$ under $\pi$ is:
\vspace{-1mm}
\begin{multline}
GED_{\pi}(\CG_1,\CG_2) = \sum_{v \in \CV_1}d_v(\mathcal{L}(v),\mathcal{L}(\pi(v)))
+ \sum_{\substack{\langle v_1,v_2\rangle \in \CV_1\times \CV_1 \land\\ v_1 < v_2}}d_e(\langle v_1,v_2\rangle,\langle \pi(v_1),\pi(v_2)\rangle)
\end{multline}
where $d_v$ and $d_e$ are distance functions over the node labels and node pairs respectively. 
\end{definition}
\vspace{-1mm}
\pke{The distance between two identical node labels is~0.
%(See App.~\ref{app:assymetry} for arbitrary distance functions). 
If an existing edge is mapped to a non-existing edge, i.e, either~$\langle v_1, v_2\rangle \not \in \CE_1$ or~$\langle \pi(v_1), \pi(v_2)\rangle \not\in \CE_2$ the cost\footnote{We define it to be $\kappa^2$ instead of $\kappa$ since it eases the notational burden in subsequent derivations.} is~$\kappa^2$, otherwise~$0$. Intuitively, mapping from a dummy node/edge to a real one expresses insertion, while mapping from a real node/edge to a dummy one expresses deletion, and mapping from a real node to a real node of different label denotes replacement. Figure~\ref{fig:ged} in the appendix illustrates GED mappings with examples.}
%Similarly, we assume,  $d(\ell_1,\ell_2)$ models an \textit{insertion} if $\ell_1=\epsilon$, \textit{deletion} if $\ell_2=\epsilon$ and \textit{replacement} if $\ell_1\neq \ell_2$ and neither $\ell_1$ nor $\ell_2$ is a dummy.
%We assume $d$ to be a binary function, where $d(\ell_1,\ell_2)=1$ if $\ell_1\neq\ell_2$, otherwise, $0$. %Our framework easily extends to more general distance functions (details in App.~\ref{app:proofs}).

\begin{definition}[GED]\label{def:ged}
\pke{GED is the minimum distance among all mappings.}
\begin{equation}\label{eq:ged_form}
GED(\CG_1,\CG_2) = \min_{\forall \pi \in \Phi(\CG_1,\CG_2)}GED_{\pi}(\CG_1,\CG_2)
\end{equation} 
$\Phi(\CG_1, \CG_2)$ denotes all possible node maps from $\CG_1$ to $\CG_2$.
\end{definition}

\vspace{-0.1in}
\subsection{Mapping GED to Graph Alignment}
\vspace{-0.1in}
\pke{We now establish that unrestricted graph alignment (UGA)~\citep{survey1} forms an instance of GED. Building on this connection, we recast GED by \Cref{def:ged} as a \textit{generalized} graph alignment problem, leading to algebraic methods for GED estimation.}

\begin{definition}[Unrestricted Graph Alignment]
\pke{Unrestricted graph alignment calls to find a bijection $\pi: \CV_1\rightarrow \CV_2$ that minimizes edge disagreements between the two graphs. Formally:}
\begin{equation}\label{eq:alignment}
\min_{\pi \in \Phi(\CG_1,\CG_2)} \|AP_{\pi} - P_{\pi}B\|^{2}_{F},
\end{equation}
\pke{Here, $A$ and~$B$ are the adjacency matrices of graphs~$\mathcal{G}_1$ and~$\mathcal{G}_2$, respectively,~$\|.\|_{F}$ denotes the Frobenius Norm, and $P_\pi$ is a permutation matrix , where~$P_\pi[i,j] = 1$ if~$\pi(i) = j$, otherwise~$0$.}
\end{definition}
% \& Chemical Distance.}
% Graph alignment finds various applications, such as matching protein networks across species~\cite{biology1,biology2}, identifying users in social networks~\cite{socialnetworks}, and feature matching in computer vision~\cite{cv1,cv2}.%Notably, the expression in Equation~\ref{cd} is the square of \emph{Chemical Distance} (CD)~\cite{Kvasnicka1991ReactionAC} between~$\mathcal{G}_{1}$ and~$\mathcal{G}_{2}$}.
% \pk{~\cite{bento2018family} introduce the \emph{Modified Chemical Distance} (MCD) that incorporates node labels. Let~$\psi_{1}$ and~$\psi_{2}$ be mappings that associate nodes of graphs~$\mathcal{G}_{1}$ and~$\mathcal{G}_{2}$, both of size~$n$, to a metric space~$(\Omega, d)$. Formally, $\psi_{1}: [n] \rightarrow \Omega$ and $\psi_{2}: [n] \rightarrow \Omega$. The matrix~$D_{\psi_{1}, \psi_{2}}$ is:
% \begin{equation} \label{matrix_d}
% D_{\psi_{1}, \psi_{2}} = \left[ d(\psi_{1}(u), \psi_{2}(v)) \right]_{u, v \in [n]} \in \mathbb{R}^{n \times n}_{+},
% \end{equation}}
% The Modified Chemical Distance (MCD) is:
% \begin{equation}\label{mcd}
% \text{MCD}(\mathcal{G}_{1}, \mathcal{G}_{2}) = \min_{P \in \mathbb{P}^{n}} \|AP - PB\|_{F} + tr(P^{T}D_{\psi_{1}, \psi_{2}})
% \end{equation}
%The following theorem will be useful in the following.
%\subsection{Formulating (CD)$^2$ as an instance of GED}
%\subsubsection{Unrestricted Graph Alignment \& GED}

\pkl{The proof of the following theorem is in Appendix~\ref{app:proofs}.}

\begin{restatable}{theorem}{uga}\label{thm:equivalence}
\pke{Given graphs~$\mathcal{G}_{1}$ and~$\mathcal{G}_{2}$ of size~$n$, if the \pkn{\textit{edge insertion} and \textit{deletion} cost is~$\kappa^2=2$ and node substitution cost is~0}, then GED($\mathcal{G}_1$, $\mathcal{G}_2$) = $\min_{\pi \in \Phi(\CG_1,\CG_2)} \|AP_{\pi} - P_{\pi}B\|^{2}_{F}$.}
\end{restatable}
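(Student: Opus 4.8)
The plan is to prove the stronger \emph{pointwise} identity $GED_\pi(\CG_1,\CG_2) = \|AP_\pi - P_\pi B\|_F^2$ for every fixed mapping $\pi \in \Phi(\CG_1,\CG_2)$, and only then take the minimum over $\pi$ on both sides; since the two quantities agree mapping-by-mapping, their minima coincide and \Cref{thm:equivalence} follows immediately. This reduces the whole claim to a purely algebraic verification for a single permutation matrix $P = P_\pi$, where $P_\pi[i,j]=1$ iff $\pi(i)=j$.

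First I would simplify the left-hand side. Under the stated costs the node substitution cost is $0$, so the first sum in \Cref{def:gedmap} vanishes and $GED_\pi$ collapses to its edge term. Each unordered pair $\langle v_1,v_2\rangle$ with $v_1<v_2$ contributes $\kappa^2=2$ exactly when its edge status disagrees with that of its image, i.e.\ when $a_{v_1 v_2}\neq b_{\pi(v_1)\pi(v_2)}$, and $0$ otherwise. Hence $GED_\pi = 2\sum_{i<j}\big[a_{ij}\neq b_{\pi(i)\pi(j)}\big]$, an indicator sum over the upper triangle (here $A=[a_{ij}]$ and $B=[b_{ij}]$ are the two adjacency matrices).

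Next I would expand the right-hand side. Since $P$ is orthogonal ($P^{T}P=I$), I would rewrite $\|AP - PB\|_F^2 = \|P^{T}AP - B\|_F^2$. A direct index computation gives $(P^{T}AP)_{ij} = a_{\pi^{-1}(i)\pi^{-1}(j)}$, so after reindexing the Frobenius sum through the bijection $\pi$ one obtains $\|AP-PB\|_F^2 = \sum_{i,j}\big(a_{ij} - b_{\pi(i)\pi(j)}\big)^2$. Because all entries lie in $\{0,1\}$, each squared difference equals the mismatch indicator $\big[a_{ij}\neq b_{\pi(i)\pi(j)}\big]$.

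Finally I would reconcile the two counts. The diagonal terms $i=j$ vanish (no self-loops, so $a_{ii}=b_{ii}=0$), and the symmetry of the adjacency matrices makes the ordered double sum over $(i,j)$ equal to twice the unordered sum over $i<j$, yielding $\|AP-PB\|_F^2 = 2\sum_{i<j}\big[a_{ij}\neq b_{\pi(i)\pi(j)}\big]$. This matches the expression for $GED_\pi$ exactly once $\kappa^2=2$, completing the pointwise identity and hence the theorem. I expect the main obstacle to be purely bookkeeping: pinning down the permutation convention so that $P^{T}AP$ is genuinely the relabelled adjacency matrix, and tracking the factor of $2$ — arising jointly from the double-counting of symmetric pairs and the choice $\kappa^2=2$ — so that it cancels cleanly rather than leaving a stray constant.
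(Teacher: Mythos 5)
Your proposal is correct and follows essentially the same route as the paper's proof: both establish the pointwise identity $GED_\pi(\mathcal{G}_1,\mathcal{G}_2)=\|AP_\pi-P_\pi B\|_F^2$ for each fixed $\pi$ (using that with $\kappa^2=2$ and zero node costs, $GED_\pi$ counts mismatched pairs $a_{ij}\neq b_{\pi(i)\pi(j)}$, which the symmetry of the adjacency matrices doubles into the full ordered sum $\sum_{i,j}(a_{ij}-b_{\pi(i)\pi(j)})^2=\|A-P_\pi BP_\pi^T\|_F^2=\|AP_\pi-P_\pi B\|_F^2$), and then minimize over $\pi$. Your explicit index computation of $(P^TAP)_{ij}$ and the reindexing through $\pi$ merely spell out what the paper asserts in one line, so there is no substantive difference.
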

%By Lemma~\ref{lemma:ged}, ED($\pi$) = $\|AP - PB\|^{2}$. It is GED = $\min_{\pi}$ ED($\pi$) = $\min_{\pi \in \phi(\CG_1,\CG_2)} \|AP_{\pi} - P_{\pi}B\|^{2}_{F}$, since there is a one-to-one mapping between alignment functions~$\pi$ and permutation matrices~$\pi$%.; then GED($\mathcal{G}_1$, $\mathcal{G}_2$) = CD($\mathcal{G}_1$, $\mathcal{G}_2$)$^{2}$.}
%\end{proof}
%We show that by expressing the objective of Graph Alignment between two graphs~$\mathcal{G}_{1}$ and~$\mathcal{G}_{2}$, defined in Equation~\eqref{cd}, as an instance of GED. Consider an instance of GED between graphs~$\mathcal{G}_{1}$ and~$\mathcal{G}_{2}$ whose adjacency matrices are~$A, B$ respectively, 
% defined as follows:}
% \begin{itemize}
%     \item Node insertion cost = 0
%     \item Node deletion cost = 0
%     \item Node substitution cost = 0
%     \item Edge insertion cost = 2
%     \item Edge deletion cost = 2
%     \item Edge substitution cost = 0
% \end{itemize}
%\nips{We show that the GED between~$\mathcal{G}_{1}$ and~$\mathcal{G}_{2}$ by these edit costs is equal to the objective of Graph Alignment.}
%\begin{lemma}\label{lemma:ged}
%\pk{Given graphs~$\mathcal{G}_{1}$, $\mathcal{G}_{2}$ of size~$n$ and a node alignment function~$p: [n] \rightarrow [n]$ mapping each node~$i$ in~$\mathcal{G}_{1}$ to a node~$\pi(i)$ in~$\mathcal{G}_{2}$, the edit distance corresponding to the node alignment by the aforementioned edit costs is equal to~$\|AP - PB\|^{2}_{F}$, where~$\pi$ is a permutation matrix with~$P_{ij} = 1$ if~$\pi(i) = j$, otherwise~0.}
%\end{lemma}
\vspace{-3mm}
% \begin{corollary}
% Computing GED is \APXhard.
% \end{corollary}
% \vspace{-2mm}
% \begin{proof}
% \pke{Unrestricted Graph Alignment is \APXhard even with an approximation factor that grows linearly with~$n$~\cite{apx, grampa}. By Theorem~\ref{thm:equivalence}, it follows that GED computation is \APXhard.}
% \end{proof}
%\pki{The preceding lemmata establish that the objective of Unrestricted Graph Alignment is an instance of GED.} \nips{It is already established that  Having now established that Unrestricted Graph Alignment is an instance of GED, it follows that GED is APX-Hard to approximate even within an approximation factor that grows linearly with the number of nodes.}
\vspace{-0.1in}
\section{\name: Proposed Method}\label{sec:soln}
\vspace{-0.1in}
\pke{While \Cref{thm:equivalence} establishes graph alignment as a special case of GED, Equation~\eqref{eq:alignment} assumes a specific instance of edits costs and ignores node labels, setting node edit costs to~$0$. We next frame GED as a generalized graph alignment problem with \textit{arbitrary} edit costs.}

%\pk{Here we introduce our innovative unsupervised method that estimates the GED between a pair of graphs, $\mathcal{G}_{1}$ and~$\mathcal{G}_{2}$, and also generates a comprehensible node alignment corresponding to the estimated GED. We emphasize that the GED approximated by our method is explainable, as it returns an existing edit path from graph~$\mathcal{G}_{1}$ to~$\mathcal{G}_{2}$ whose edit cost corresponds to the approximated GED. As the true GED is the minimum edit cost over all possible node alignments, the returned GED upper-bounds the true GED.}
%\pk{Our approach builds upon the graph alignment. % in two stages:
% \begin{itemize}
% \item We show the objective of Unrestricted Graph Alignment, i.e., squared Chemical Distance, to be a GED instance. % by assigning costs to edit operations such as addition, deletion, and substitution of nodes and edges.
%We formulate an optimization for computing a Mildly-Constrained version of GED, \textsc{Mc-GED}, via the objective of graph alignment.
% \end{itemize}
%Since the defined optimization problem is computationally hard, we propose two approximations that minimize the function over the space of doubly stochastic matrices~$\mathbb{W}^{n}$ instead of the space of Permutation matrices~$\mathbb{P}^{n}$. After obtaining the solution from the approximations, we convert it into a permutation matrix that represents a valid node alignment by one of two methods, Greedy and Hungarian. Lastly, we output the generated permutation matrix defining the node alignment to which the approximated GED corresponds.}
\vspace{-0.1in}
\subsection{GED as Generalized Graph Alignment}

\pke{Given graphs~$\mathcal{G}_{1}$ and~$\mathcal{G}_{2}$, arbitrary costs for node edits, and cost~$\kappa^2$ for edge edits, where~$\kappa$ is a scalar, we propose a closed-form expression for \textit{generalized} graph alignment:}
\vspace{-2mm}
\begin{comment}
\begin{itemize}
\item Node insertion cost $c(\nodeins(j))$ = Node deletion cost $c(\nodedel(i))$, where $i \in \mathcal{V(G}_{1}),\; j \in \mathcal{V(G}_{2})$.
%, $c(\nodeins(j))$ is the associated insertion cost
%, $c(\nodedel(i))$ is the associated deletion cost
%\item Node substitution cost = $c(\nodesub(i, j))$, where $i \in \mathcal{V(G}_{1})$ and $j \in \mathcal{V(G}_{2})$.
%, $c(\nodesub(i, j))$ is the associated substitution cost
\item Edge insertion cost = $k^2$
\item Edge deletion cost = $k^2$
\item Edge substitution is an operation we do not allow.
\end{itemize}
where $c$ can be any arbitrary cost function and $k$ is a scalar. Under these mildly constrained edit costs, 
\end{comment}
\begin{equation} \label{ged_chdist}
\min_{\pi \in \Phi(\CG_1,\CG_2)} \frac{||\tilde{A}P_{\pi} - P_{\pi}\tilde{B}||_F^{2}}{2} + tr(P_{\pi}^{T}D)
\end{equation}

\pke{Let~$A, B$ be adjacency matrices of~$\mathcal{G}_{1}$, $\mathcal{G}_{2}$, respectively, having extended the smaller graph to the size of the larger by adding dummy nodes. We set~$\tilde{A} = \kappa \cdot A, \tilde{B} = \kappa \cdot B$ and define~$D$ as:} % what is this operation?

\begin{equation}
\label{eq:d}
d_{ij} =
\begin{cases}
d_{v}(\epsilon, \mathcal{L}(j)),    & \text{if } i \text{ is a dummy node in } \mathcal{G}_{1}\\
d_{v}(\mathcal{L}(i), \epsilon),    & \text{if } j \text{ is a dummy node in } \mathcal{G}_{2}\\
d_{v}(\mathcal{L}(i), \mathcal{L}(j))), & \text{if }\mathcal{L}(i)\neq\mathcal{L}(j)
\end{cases}
\end{equation}
\pke{where~$d_v$ is the distance function over the node labels by Definition~\ref{def:gedmap} and~$\epsilon$ is the label assigned to dummy nodes.}
%\begin{equation}
%\label{eq:d}
%d_{ij} =
%\begin{cases}
%\gamma,    & \text{if } i\in\mathcal{G}_{1}\text{ or }j\in\mathcal{G}_{2}\text{ is a dummy node} \\
%0, & \text{if }\mathcal{L}(i)=\mathcal{L}(j)\\
%\gamma, & \text{otherwise}% (node deletion followed by insertion)}
%\end{cases}
%\end{equation}
\pke{We show that, with~$\tilde{A}, \tilde{B}, D$ as above, Equation~\eqref{ged_chdist} amounts to GED with arbitrary edit costs. Intuitively, the first term captures edge edits under mapping~$\pi$, the second term node edits.} \pkl{The proof is in Appendix~\ref{app:proofs}.}

\begin{restatable}{theorem}{geduga}\label{thm:equivalence2}
Given two graphs~$\mathcal{G}_{1}$ and~$\mathcal{G}_{2}$ of size~$n$, GED($\mathcal{G}_1$, $\mathcal{G}_2$) = $\min_{\pi \in \Phi(\CG_1,\CG_2)} \frac{||\tilde{A}P_{\pi} - P_{\pi}\tilde{B}||_F^{2}}{2} + tr(P_{\pi}^{T}D)$, where $\tilde{A}, \tilde{B}$ and $D$ are defined as above.
\end{restatable}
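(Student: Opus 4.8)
The plan is to show that the objective in Equation~\eqref{ged_chdist} equals $GED_\pi(\CG_1,\CG_2)$ term-by-term for each fixed mapping $\pi$, so that minimizing over all $\pi$ recovers GED by \Cref{def:ged}. Fixing $\pi$ with permutation matrix $P:=P_\pi$, I would split the analysis into the edge contribution and the node contribution, matching the two summands in \Cref{def:gedmap}.

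First I would handle the edge term $\tfrac12\|\tilde A P - P\tilde B\|_F^2$. Since $\tilde A=\kappa A$ and $\tilde B=\kappa B$, this is $\tfrac{\kappa^2}{2}\|AP-PB\|_F^2$. The key observation is that $P$ is a permutation matrix, so right-multiplication by $P$ permutes columns and left-multiplication permutes rows: the $(i,j)$ entry of $AP-PB$ compares the adjacency value $a_{i,\pi^{-1}(j)}$ against $b_{\pi(i)... }$, and it is cleaner to conjugate and write $AP-PB = (A - PBP^T)P$, whence $\|AP-PB\|_F^2 = \|A-PBP^T\|_F^2$ because $P$ is orthogonal and the Frobenius norm is unitarily invariant. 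The matrix $C:=PBP^T$ is precisely the adjacency matrix of $\CG_2$ relabelled by $\pi$, so $(A-C)_{uv}=a_{uv}-b_{\pi(u)\pi(v)}\in\{-1,0,1\}$, and $(A-C)_{uv}^2=1$ exactly when an edge is present in one graph but absent in the other under the mapping. Summing $\tfrac{\kappa^2}{2}\sum_{u,v}(A-C)_{uv}^2$ double-counts each unordered pair (because $A$, $C$ are symmetric), so the factor $\tfrac12$ cancels the double count and yields $\kappa^2$ per disagreeing unordered pair $\{v_1,v_2\}$ with $v_1<v_2$. This matches the edge summand in \Cref{def:gedmap}, where each mismatched edge costs $\kappa^2$ (recall the cost was defined as $\kappa^2$ precisely to ease this bookkeeping).

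Next I would handle the node term $tr(P^T D)$. Expanding, $tr(P^TD)=\sum_{i,j}P_{ij}d_{ij}=\sum_{i}d_{i,\pi(i)}$, since $P_{ij}=1$ iff $\pi(i)=j$. By the definition of $D$ in Equation~\eqref{eq:d}, each entry $d_{i,\pi(i)}$ equals $d_v(\mathcal L(i),\mathcal L(\pi(i)))$ across all three cases: insertion when $i$ is a dummy node (label $\epsilon$ mapped to a real label), deletion when $\pi(i)$ is a dummy node, and substitution when both are real with differing labels; identical labels contribute $0$, consistent with the convention that $d_v$ of equal labels is $0$. Hence $tr(P^TD)=\sum_{v\in\CV_1}d_v(\mathcal L(v),\mathcal L(\pi(v)))$, exactly the node summand of \Cref{def:gedmap}.

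Combining the two computations gives that the objective equals $GED_\pi(\CG_1,\CG_2)$ for every $\pi$, and taking the minimum over $\Phi(\CG_1,\CG_2)$ completes the proof via \Cref{def:ged}. I expect the main obstacle to be the edge-term bookkeeping: carefully justifying the unitary-invariance rewrite $\|AP-PB\|_F^2=\|A-PBP^T\|_F^2$, verifying that $PBP^T$ is genuinely the $\pi$-relabelled adjacency matrix, and confirming that the factor of $\tfrac12$ exactly compensates the symmetric double-counting so the per-pair cost lands on $\kappa^2$ rather than $2\kappa^2$ or $\tfrac{\kappa^2}{2}$. The node term is essentially a direct unpacking of definitions and should be routine once the case analysis in Equation~\eqref{eq:d} is aligned with \Cref{def:gedmap}.
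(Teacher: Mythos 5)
Your proposal is correct and follows essentially the same route as the paper's proof: both rewrite the edge term as $\tfrac{1}{2}\|\tilde{A}-P_\pi\tilde{B}P_\pi^{T}\|_F^2$ (using that $P_\pi$ is orthogonal), expand it entrywise so that the factor $\tfrac12$ absorbs the symmetric double-counting and each mismatched unordered pair contributes $\kappa^2$, unpack $tr(P_\pi^{T}D)=\sum_i d_{i,\pi(i)}$ against the case analysis of Equation~\eqref{eq:d} to recover the node summand, and then minimize over $\pi$. No gaps; the bookkeeping you flag as the main obstacle is exactly what the paper's proof carries out.
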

\begin{comment}
\begin{lemma}\label{lemma:mild_ged}
\pk{Given two graphs~$\mathcal{G}_{1}$, $\mathcal{G}_{2}$ of size~$n$ whose adjacency matrices are~$A, B$ respectively, a constant~$k$ where~$\kappa^2$ is the edge insertion and deletion cost, matrix~$D$ as defined above and a permutation matrix~$P$, $\frac{||\tilde{A}P - P\tilde{B}||_F^{2}}{2} + tr(P^{T}D)$ equals the mildly-constrained edit distance (\textsc{Mc-ED}) by the above specified edit costs, for node alignment $p: [n] \rightarrow [n]$, where~$\pi(i) = j$ if and only if~$P_{ij} = 1$.}
\end{lemma}
\end{comment}

\begin{comment}
\begin{lemma}
\pk{For graphs~$\mathcal{G}_{1}$ and~$\mathcal{G}_{2}$ of size~$n$, the distance by Equation~\eqref{ged_chdist} equals the \textsc{Mc-GED} between~$\mathcal{G}_{1}$ and~$\mathcal{G}_{2}$.}
\end{lemma}

\begin{proof}
\pk{By Lemma~\ref{lemma:mild_ged}, $\frac{||\tilde{A}P - P\tilde{B}||_F^{2}}{2} + tr(P^{T}D)$ equals \textsc{Mc-ED}($p$), hence~$\min_{P \in \mathbb{P}^{n}} \frac{||\tilde{A}P - P\tilde{B}||_F^{2}}{2} + tr(P^{T}D)$ corresponds to~$\min_{p}$ \textsc{Mc-ED}($p$) due to the one-to-one mapping between~$p$ and~$P$. Thus, the distance computed by Equation~\eqref{ged_chdist} is the \textsc{Mc-GED} between~$\mathcal{G}_{1}$ and~$\mathcal{G}_{2}$.}
\end{proof}
\end{comment}

% \vspace{-3mm}
\pke{\ipfp~\cite{ipfp} also formulates GED as a quadratic assignment problem, yet it flattens the permutation matrix into a vector and operates on a cost matrix~$C = (c_{ik,jl})_{i,k,j,l}$, where~$c_{ik,jl}$ denotes the cost of editing edge~$(i, j)$ in one graph to edge~$(k, l)$ in the other. In contrast, \name preserves the permutation matrix structure and operates on adjacency matrices~$A$ and~$B$, expressing edge discrepancies through the difference of the permuted matrices~$\tilde{A}P$ and~$P\tilde{B}$. This structure-aware formulation reduces time complexity from~$O(n^4)$ in \ipfp to~$O(n^3)$ and is also more space-efficient: while~$C$ is a dense matrix of size~$n^2 \!\times\! n^2$, $\tilde{A}$ and~$\tilde{B}$ are~$n \!\times\! n$ and usually sparse. Moreover, \name is numerically more stable, while~$C$ becomes ill-conditioned and thus unsuitable for gradient-based optimization for similar~$A$ and~$B$, which render the rows and columns of~$C$ nearly linearly dependent. Besides, \name naturally accommodates permutation and doubly-stochastic constraints and maintains a spectral connection to the eigenvalues of~$A$ and~$B$, which enables the use of spectral techniques~\cite{grasp,inexact,isorank}. Lastly, \ipfp relies on off-the-shelf optimization methods, while \name uses a custom optimization strategy, which confers the advantages shown in \S~\ref{sec:exp}.}

\pkl{Grounded in our structure-aware reformulation of GED as generalized graph alignment problem based on adjacency matrices, we can leverage advances in graph alignment for GED estimation purposes. \fugal~\cite{fugal}, the current state-of-the-art solution for UGA, relaxes a quadratic assignment problem with an objective built on a non-convex correlation term to the feasible set of doubly stochastic matrices and applies the Frank--Wolfe algorithm~\cite{frank1956algorithm} guided by a Sinkhorn--Knopp normalization~\cite{cuturi2013sinkhorn} to iteratively step within that feasible set in a direction most aligned with the negative gradient. As our experimental study reveals, while this approach is good enough for graph alignment, where solutions are evaluated by the proportion of correctly aligned nodes, it yields poor results in terms of GED, where solutions are strictly evaluated by the difference of their GED cost from the ground truth. We conclude that GED estimation calls for a more rigorous approach directly targeting the convex GED cost as the core objective with stable gradient updates. Nonetheless, we adopt from \fugal the idea of refining a doubly stochastic matrix towards a quasi-permutation matrix.}
% \fugal cannot be directly applied to GED estimation, as demonstrated in \S: where?
\vspace{-0.1in}
%\vspace{-0.2in}
%\subsection{Optimization Strategy}\label{sec:optimizer} % no need
\vspace{-0.05in}
\subsection{Permutation-Inducing Regularization}
\vspace{-0.1in}
\pke{While Equation~\eqref{ged_chdist} provides a closed-form expression, finding the permutation matrix that minimizes it is notoriously hard, as the space of permutation matrices is not convex.
%Here, we relax this optimization problem to approximate the \textsc{Ged} by expanding the optimization domain from the discrete space of permutation matrices to the more tractable space of \emph{doubly stochastic} matrices (Recall Def.~\ref{def:matrix}), \pki{and then guide the solution towards a \emph {quasi-permutation matrix}}. %We propose two relaxations, \name-1 and~\name-2.}
To circumvent this non-tractability, we relax Equation~\eqref{eq:alignment} form the set of permutation matrices to that of doubly stochastic matrices~$\mathbb{W}^{n}$, rendering the problem convex~\citep{bento2018family}, and solve the relaxed form of Equation~\eqref{ged_chdist}:}
\vspace{-0.1in}
\begin{equation}
\label{relaxation1}
    \begin{gathered}
        \min_{\pk{P}\in\mathbb{W}^n} \frac{||\tilde{A}P - P\tilde{B}||_F^2}{2} + tr(P^{T}D)\\
        \text{Constraints: } P\textbf{1} = \textbf{1}, P^T\textbf{1} = \textbf{1}, 0 \leq P_{ij} \leq 1 
    \end{gathered}
\end{equation}

\pke{Equation~\ref{relaxation1} is convex, as it minimizes a convex function over a convex domain~\cite{boyd2004convex} and solvable with Adam~\cite{kingma14}, yet the optimal doubly-stochastic matrix does not solve our exact problem.} \pkl{Still, these two matrix domains are connected as follows~\cite{fugal}; the proofs are in Appendix~\ref{app:proofs}.}

\begin{restatable}{lemma}{doublystoch}\label{theorem:doubly-stoch}
A doubly-stochastic matrix~$A$ with~$tr(A^{T}(J - A)) = 0$ is a permutation matrix.
\end{restatable}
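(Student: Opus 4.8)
The plan is to expand the trace expression algebraically and reduce it to a manifestly nonnegative sum whose vanishing forces each entry of $A$ to be binary. First I would split the trace by linearity as $tr(A^{T}(J-A)) = tr(A^{T}J) - tr(A^{T}A)$. The second term is immediate: $tr(A^{T}A) = \|A\|_F^2 = \sum_{i,j} a_{ij}^2$. For the first term, I would use the fact that $J$ is the all-ones matrix together with the doubly stochastic constraints $A\mathbf{1} = \mathbf{1}$ and $A^{T}\mathbf{1} = \mathbf{1}$. Concretely, $tr(A^{T}J) = \sum_{i,j} a_{ij}$, and since every row of $A$ sums to $1$ over the $n$ rows, this total equals $n$. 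Hence the hypothesis becomes
\begin{equation*}
tr(A^{T}(J-A)) = n - \sum_{i,j} a_{ij}^2 = \sum_{i,j} \bigl(a_{ij} - a_{ij}^2\bigr) = \sum_{i,j} a_{ij}(1 - a_{ij}).
\end{equation*}

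The key observation I would then invoke is that $A$ is doubly stochastic, so $0 \le a_{ij} \le 1$ for every $i,j$, which makes each summand $a_{ij}(1 - a_{ij})$ nonnegative. A sum of nonnegative terms equals zero only if every term is zero, so $a_{ij}(1 - a_{ij}) = 0$, i.e. $a_{ij} \in \{0,1\}$, for all $i,j$. At this point $A$ is a binary matrix that is still doubly stochastic, so I would finish by arguing that a $\{0,1\}$-valued matrix with all row and column sums equal to $1$ has exactly one nonzero entry per row and per column, which is precisely the definition of a permutation matrix (matching Definition~\ref{def:matrix}).

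I do not expect a genuine obstacle here, as the argument is a short two-line computation followed by a nonnegativity observation; the only point requiring mild care is the evaluation of $tr(A^{T}J)$, where one must correctly identify that this trace equals the sum of all entries of $A$ (rather than, say, a single row or column sum) and then use the doubly stochastic normalization to see that this total is exactly $n$. The final step deducing ``binary plus doubly stochastic implies permutation'' is standard and could even be cited, but I would include the one-sentence direct justification above for completeness.
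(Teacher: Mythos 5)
Your proof is correct and follows essentially the same route as the paper's: both reduce the hypothesis to $\sum_{i,j} a_{ij}(1-a_{ij}) = 0$, invoke $0 \le a_{ij} \le 1$ from double stochasticity to force each term to vanish, and conclude that a binary doubly stochastic matrix is a permutation matrix. The only difference is cosmetic --- you derive the key identity via $tr(A^{T}J) - tr(A^{T}A) = n - \sum_{i,j} a_{ij}^2$, while the paper passes directly from the trace to the entrywise sum (which also follows at once from $tr(A^{T}M) = \sum_{i,j} a_{ij} m_{ij}$).
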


\pke{Utilizing this connection, we add a bias to our objective function in the following form.}
%\pki{yet restrict the solution to a \emph{quasi-permutation} matrix.} By Theorem~\ref{theorem:doubly-stoch}, if~$P$ is doubly-stochastic and~$tr(P^{T}(J - P)$ = 0, then~$P$ is a permutation matrix. We rewrite the expression to find mildly-constrained GED in Eq.~\eqref{ged_chdist} as:
\begin{comment}
\begin{equation} \label{new_ged}
    \begin{gathered}
        \min_{P \in \mathbb{W}^{n}} \frac{||AP - PB||_F^{2}}{2} + tr(P^{T}D) \\
        \text{Constraints: } tr(P^{T}(J - P)) = 0
    \end{gathered}
\end{equation}
We introduce parameters~$\mu$ and~$\lambda$ and relax further to:
\end{comment}
%\begin{equation} \label{relaxation2}
%    \min_{P \in \mathbb{W}^{n}} \frac{||AP - PB||_F^2}{2} + \mu \cdot (tr(P^{T}D)) + \lambda \cdot (tr(P^{T}(J - P)))
%\end{equation}
%Equivalently:
\vspace{-0.1in}
\begin{equation}\label{relaxation2_v2}
    \begin{gathered}
        \min_{\pk{P}} \!\frac{||\tilde{A}\!P \!-\! P\!\tilde{B}||_F^2}{2} \!+\! \mu \cdot (tr(P^{T}\!D)) + \lambda \cdot (tr(P^{T}\!(J \!-\! P)))\\
        \text{Constraints: } P\textbf{1} \!=\! P^T\textbf{1} \!=\! \textbf{1}, 0 \!\leq\! P_{ij} \!\leq\! 1
    \end{gathered}
\end{equation}
where $\mu$ and $\lambda$ are weight parameters. \pkl{\fugal extracts a non-convex correlation term from this objective; contrarily, we preserve convexity and thus derive a spectral guarantee:}

\begin{restatable}{theorem}{convexity}\label{theorem:convexity}
The function in Equation~\eqref{relaxation2_v2} is convex for $\lambda \!\leq\! \frac{(\lambda_{i}(\tilde{A}) - \lambda_{j}(\tilde{B}))^2}{2}$,
for all \(i, j \!\in\! \{1, 2, \dots, n\}\), where~\(\lambda_{i}(\tilde{A})\) and~\(\lambda_{j}(\tilde{B})\) represent the eigenvalues of~\(\tilde{A}\) and~\(\tilde{B}\), respectively.
\end{restatable}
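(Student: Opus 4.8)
The plan is to analyze the convexity of the objective in Equation~\eqref{relaxation2_v2} by examining its Hessian. The objective is a sum of three terms: the edge-alignment term $\frac{1}{2}\|\tilde{A}P - P\tilde{B}\|_F^2$, the node-cost term $\mu \cdot tr(P^T D)$, and the regularizer $\lambda \cdot tr(P^T(J-P))$. Since the node-cost term is \emph{linear} in $P$ (it contributes zero to the Hessian), convexity is entirely governed by the quadratic terms. First I would vectorize $P$ into $p = \mathrm{vec}(P) \in \mathbb{R}^{n^2}$ and rewrite each quadratic term as a quadratic form $\frac{1}{2}p^T H p + \text{(linear)}$, so that the overall Hessian is $H = H_{\text{edge}} + H_{\text{reg}}$; the function is convex if and only if $H \succeq 0$.

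**Computing the two Hessian contributions via Kronecker products**

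The key algebraic step is to express both Hessians using the identity $\mathrm{vec}(XPY) = (Y^T \otimes X)\,\mathrm{vec}(P)$. For the edge term, expanding $\|\tilde{A}P - P\tilde{B}\|_F^2 = tr\big((\tilde{A}P - P\tilde{B})^T(\tilde{A}P - P\tilde{B})\big)$ and using that $\tilde A, \tilde B$ are symmetric, I expect the Hessian to come out as $H_{\text{edge}} = (I \otimes \tilde{A} - \tilde{B} \otimes I)^2$, a perfect square and hence positive semidefinite on its own. For the regularizer, note $tr(P^T(J-P)) = tr(P^T J) - tr(P^T P) = tr(P^T J) - \|P\|_F^2$; the first piece is linear, and the $-\|P\|_F^2$ piece contributes Hessian $-2I_{n^2}$, so $H_{\text{reg}} = -2\lambda I_{n^2}$. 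This negative term is exactly what can destroy convexity, which is why $\lambda$ must be bounded.

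**Eigenvalue bound via simultaneous diagonalization**

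The crux is then to bound the smallest eigenvalue of $H = (I \otimes \tilde{A} - \tilde{B} \otimes I)^2 - 2\lambda I$. Because $\tilde A$ and $\tilde B$ are symmetric, they admit spectral decompositions, and the eigenvalues of the Kronecker-structured matrix $I \otimes \tilde{A} - \tilde{B} \otimes I$ are precisely the pairwise differences $\lambda_i(\tilde{A}) - \lambda_j(\tilde{B})$ for $i,j \in \{1,\dots,n\}$. Consequently the eigenvalues of its square are $(\lambda_i(\tilde{A}) - \lambda_j(\tilde{B}))^2$, and the eigenvalues of $H$ are $(\lambda_i(\tilde{A}) - \lambda_j(\tilde{B}))^2 - 2\lambda$. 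Requiring all of these to be nonnegative yields $2\lambda \leq (\lambda_i(\tilde{A}) - \lambda_j(\tilde{B}))^2$ for every pair $(i,j)$, i.e. $\lambda \leq \frac{(\lambda_i(\tilde{A}) - \lambda_j(\tilde{B}))^2}{2}$, which is exactly the claimed bound.

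**Anticipated main obstacle**

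The routine parts are the vectorization and the Frobenius-norm expansion. The step I expect to require the most care is correctly identifying that the eigenvalues of $I \otimes \tilde{A} - \tilde{B} \otimes I$ are the differences $\lambda_i(\tilde A) - \lambda_j(\tilde B)$ (rather than sums), which hinges on the sign convention and on the fact that $\tilde A$ and $\tilde B$ can be diagonalized in a shared Kronecker eigenbasis $u_j \otimes v_i$; getting this simultaneous-diagonalization argument clean is the heart of the proof. A secondary subtlety is confirming that restricting attention to the full space $\mathbb{R}^{n^2}$ (rather than the affine subspace cut out by the doubly-stochastic constraints $P\mathbf{1}=\mathbf{1},\,P^T\mathbf{1}=\mathbf{1}$) is legitimate: convexity on all of $\mathbb{R}^{n^2}$ certainly implies convexity on the constraint polytope, so the stated bound is sufficient, though possibly not tight once the constraints are taken into account.
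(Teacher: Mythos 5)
Your proposal is correct and takes essentially the same route as the paper's proof: both compute the vectorized Hessian $I \otimes \tilde{A}^2 - 2(\tilde{B}\otimes\tilde{A}) + \tilde{B}^2 \otimes I - 2\lambda(I\otimes I)$ (your perfect-square form $(I\otimes\tilde{A}-\tilde{B}\otimes I)^2 - 2\lambda I$ expands to exactly this), observe that the $\mu\cdot tr(P^T D)$ term is linear and contributes nothing, and obtain the bound by requiring the eigenvalues $(\lambda_i(\tilde{A})-\lambda_j(\tilde{B}))^2 - 2\lambda$ to be nonnegative. Your explicit simultaneous-diagonalization argument in the Kronecker eigenbasis, and your closing remark that convexity on all of $\mathbb{R}^{n^2}$ suffices (though may not be tight) on the doubly-stochastic polytope, merely spell out steps the paper leaves implicit.
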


\pke{For~$\lambda$ = 0, the problem in Equation~\eqref{relaxation2_v2} is convex. To derive a quasi-permutation matrix, we solve Equation~\eqref{relaxation2_v2} with~$\lambda = 0$ using Adam and refine the solution by gradually increasing~$\lambda$, until it diverges.} \pkl{This regularizer, which drives the double-stochastic matrix to a permutation matrix drastically enhances approximation accuracy, as we show in Appendix~\ref{app:ablation}.}
\vspace{-0.1in}
\subsection{\madam Details}\label{sec:rounding}
\vspace{-0.1in}
\renewcommand{\algorithmicrequire}{\textbf{Input:}}
\renewcommand{\algorithmicensure}{\textbf{Output:}}
\begin{wrapfigure}{r}{0.57\textwidth}
\vspace{-16mm}
\begin{minipage}{0.57\textwidth}
\begin{algorithm}[H] % it looks unaligned at the top
\caption{\textsc{M-Adam}}\label{M-Adam}
\begin{flushleft} 
{\scriptsize
\textbf{Notations:}\\
$f = \frac{||\tilde{A}P - P\tilde{B}||_F^2}{2} + \mu \cdot (tr(P^{T}D)), \quad
g = tr(P^{T}(J - P))$\\
$pnlt \!=\! ||P\textbf{1} \!-\! \textbf{1}||^{2} \!+\! ||P^T\textbf{1} \!-\! \textbf{1}||^{2} \!+\! ||max(0, -P)||^{2} \!+\! ||max(0, P \!-\! J)||^{2}$\\
%$\sigma$ is the penalty-coefficient\\
\textbf{Input:} matrices~$\tilde{A}$, $\tilde{B}$, $D$ \textbf{Output:} permutation matrix $P$\\
\textbf{Algorithm}:}
\end{flushleft}
\begin{algorithmic}[1] %[1] enables line numbers
{\scriptsize
\STATE $P \!\gets\! I$, $\sigma \!\gets\! 5$,$\lambda \!\gets\! 0$, $m_{0} \!\gets\! 0$, $v_{0} \!\gets\! 0$, $\beta_1 \!\gets\! 0.9$, $\beta_2 \!\gets\! 0.99$, $\tilde{P} \!\gets\! I$, $\tilde{H} \gets I$ \label{lin:init}
\WHILE {$\textit{true}$}
\STATE $t \gets 0$
\WHILE{$\textit{not converged}$}
\STATE $t \gets t + 1$
\STATE $\textit{grad} \gets \nabla f + \sigma \cdot \nabla pnlt + \lambda \cdot \nabla g $ \label{lin:gradient_start}
\STATE $m_{t} \gets \beta_{1} \cdot m_{t - 1} + (1 - \beta_{1}) \cdot grad$; $\hat{m_{t}} \gets m_{t}/(1 - \beta_{1}^{t})$
\STATE $v_{t} \gets \beta_{2} \cdot v_{t - 1} + (1 - \beta_{2}) \cdot grad^{2}$; $\hat{v_{t}} \gets v_{t}/(1 - \beta_{2}^{t})$
\STATE $P \gets P - \alpha \cdot \hat{m_{t}}/(\sqrt{\hat{v_{t}}} + \epsilon)$ \label{lin:grad_end}
\ENDWHILE
\IF {$\textit{diverged}$}
\STATE $\textit{break}$
\ENDIF
\STATE $\sigma \gets \sigma*2, \quad \lambda \gets \lambda + 0.5$ \label{lin:inc_lambda}
\STATE $H \gets \text{Hungarian(}P)$, $\tilde{A} \gets H\tilde{A}H^\top$, $D \gets HD$
\STATE $\tilde{P} \gets \tilde{H}^\top P$, $\tilde{H} \gets H\tilde{H}$
\IF {$\sigma > \sigma_{th}$} 
\STATE $\textit{break}$
\ENDIF
\ENDWHILE
\RETURN \pke{$\tilde{P}$} \label{lin:return}}
\end{algorithmic}
\end{algorithm}
\end{minipage}
\vspace{-2mm}
\end{wrapfigure}
\Cref{M-Adam} \pkl{outlines our Modified Adam (\madam) algorithm, which initializes~$P$ as an identity matrix and~$\lambda$ to~$0$ (Line~\ref{lin:init}), and gradually increases~$\lambda$ (Line~\ref{lin:inc_lambda}). For each~$\lambda$, it starts from the solution of the previous round and iteratively updates it using the objective's gradient (Lines~\ref{lin:gradient_start}--\ref{lin:grad_end}). We employ the \textit{penalty method}~\citep{yeniay2005penalty} to enforce doubly-stochastic matrix constraints. \iclr{For a given value of~$\lambda$, the relaxed solution~$P$ is rounded to a permutation matrix~$H$ via Hungarian, which is then used to transform the problem in the subsequent iteration (see \S~\ref{sec:invlabel}).} Figure~\ref{fig:toy_example} illustrates the process with an example. \madam outputs a permutation matrix that yields an edit path for the approximated GED~\cite{hungarian}. As the true GED is the least edit cost over all alignments, the returned GED upper-bounds the true GED. Moreover, \madam is a \emph{deterministic} algorithm; for any given pair of input matrices, it always returns the same output.}
\vspace{-0.1in}
\subsection{Inverse Relabeling}\label{sec:invlabel}
\vspace{-0.1in}
\iclr{Here, we propose an \emph{inverse relabeling} strategy in \madam. The core term of our objective is~$||\tilde{A} -P\tilde{B}P^{T}||_F^2$, to be minimized over~$\mathbb{W}^n$. After the first gradient-based update iteration with fixed~$\lambda$ (outer loop in \madam), we begin enforcing permutation constraints via a regularizer. Let $H$ denote a permutation matrix obtained by rounding the relaxed solution $P$ using Hungarian projection.

Since the feasible set $\mathbb{P}^n$ is discrete, gradients are computed in the relaxed domain $\mathbb{W}^n$. However, continuing the optimization near a non-identity permutation $H$ is inefficient. A non-identity $H$ acts as a rotation of the problem's coordinate system, causing the components of the gradient to become highly coupled. This motivates recentering the problem after each outer iteration. Specifically, we transform $\tilde{A} \leftarrow H \tilde{A} H^\top$. This transformation is equivalent to the variable change $\tilde{P} = H^\top P$, as: \\
\[
\|\tilde{A} - P \tilde{B} P^\top \|_F^2 \rightarrow \| H \tilde{A} H^\top - P \tilde{B} P^\top \|_F^2 = \| \tilde{A} - H^\top P \tilde{B} \tilde{P}^\top H \|_F^2 = \| \tilde{A} - \tilde{P} \tilde{B} \tilde{P}^\top \|_F^2,
\]
\pkl{This variable change to~$\tilde{P}$ and multiplication by~$H^\top$ revokes the permutation, or \emph{inverts the labeling}, introduced by~$H$, without altering the feasible space: $\tilde{P}\! \in \!\mathbb{W}^n \iff P = H \tilde{P} \in \mathbb{W}^n$, since multiplying a doubly stochastic matrix by a permutation matrix preserves row and column sums and non-negativity.} \pkl{The updated~$\tilde{P}$ satisfies~$\tilde{P} \approx H^\top H = I$, hence gradient updates are performed in a coordinate system centered around the identity matrix $I$, allowing for more efficient and accurate corrections to small errors. Our ablation study in \S~\ref{app:ablation} validates the effectiveness of this transformation.}}

\vspace{-0.1in}
\section{Experiments}\label{sec:exp}
\vspace{-0.1in}
Here, \pke{we present a comprehensive evaluation of \name, addressing the following aspects:}
\begin{itemize}
% \item \textbf{Efficacy:} \name consistently outperformed state-of-the-art neural and non-neural approaches across various datasets under different cost settings achieving state-of-the-art performance. 
% % Despite supervised neural solvers benefiting from the advantage of training data, \name consistently ranks as the top performer. This remarkable performance demonstrates that the absence of NP-hard ground truth data and the elimination of dataset-specific model training are not impediments to achieving highly accurate Graph Edit Distance (GED) approximations, showcasing the significant advantages of our approach.
% \item \textbf{Scalability:} \name scales significantly well to large graphs and consistently outperforms the baselines.
% \item \textbf{Efficiency and greenhouse implications:} \name incurs lower computation cost than unsupervised methods of similar performance. \name operates on CPU while neural methods require GPU access raising concerns for environmental impact.
% \end{itemize}
\vspace{-0.1in}
\item \textbf{Efficacy:} \pkl{\name tops supervised and heuristic methods across datasets and costs.}
\item \textbf{Scalability:} \pkl{\name scales well to large graphs, consistently surpassing baselines.}
\item \textbf{Efficiency:} \pkl{\name incurs lower computational costs than heuristic methods with better performance; as it runs on CPUs, it curtails carbon emissions.}
\end{itemize}
\vspace{-0.2in}
\subsection{Experimental Setup}\label{sec:setup}

Appendix~\ref{app:setup} \pke{outlines the hardware and software\footnote{Our C++ code and datasets are at \url{https://anonymous.4open.science/r/Eugene-1107/}} environment, Appendices~\ref{app:parameters} presents the parameters used, and Appendix~\ref{app:ablation} reports on an ablation study.}

\textbf{Baselines:} We compare \name to~\pke{15} state-of-the-art supervised and optimization based heuristic methods. These include the following supervised methods: \graphedx~\cite{graphedx}, \gmn~\cite{icmlged}, \greed~\cite{ranjan&al22}, \eric~\cite{eric}, \simgnn~\cite{simgnn}, \hmn~\cite{h2mn}, \egsc~\cite{egsc}, \gotsim~\cite{graphotsim}, ~\gedgnn~\cite{gedgnn}, ~\gmsm~\cite{gmsm}.
%To compute the GED, \gmn and \greed use the Euclidean distance between the vector representations of two graphs. \hmn is an early interaction network that leverages higher-order node similarity through hypergraphs. \eric, \simgnn, and \egsc utilize neural networks to calculate the distance between two graphs. Additionally, these three methods predict a score based on the normalized GED in the form $\exp\left(-\frac{2 \text{GED}(G, G')}{|V| + |V'|}\right)$. Hence, we rescale the predicted score to obtain the GED prediction as $\text{GED}(G, G') = -\frac{(|V| + |V'|) \log(s)}{2}$. 
We exclude the neural approximation algorithms \graphsim~\cite{graphsim} as \graphedx and \hmn have shown vastly better performance~\cite{graphedx,h2mn}. \gennastar~\cite{WangCVPR21} does not scale for graphs of sizes more than $10$, hence excluded from the analysis. \pke{Among the neural methods included, ~\gedgnn, ~\gmsm and~\gotsim provide a node mapping corresponding to the estimated GED.} \iclr{With all baselines, when edit costs are uniform, we use the official author-released codebases with the original training protocols and default hyperparameters. However, existing baselines do not support non-uniform edit costs, except for \graphedx, which extended support to non-uniform costs and released adapted codebases for all baselines. In the non-uniform cost setting, we use these fine-tuned and publicly available versions provided by the \graphedx authors.}
%Source code of neural baselines is available at \url{https://github.com/structlearning/GraphEdX}.

In the heuristic methods category, we compare with the five best-performing methods from the benchmarking study by~\cite{blumenthal2019gedlib}, namely,  \btight~\cite{btight}, \lpged~\cite{LEROUGE2017254}, \adjip~\cite{adjip}, \ipfp~\cite{ipfp} and \compact~\cite{blumenthal2020exact}. %The methods \lpged, \adjip, and \compact employ a mixed integer programming framework based on the LP-GED paradigm to approximate the GED. In contrast, \btight iteratively solves instances of the linear sum assignment problem or the minimum-cost perfect bipartite matching problem. Other methods, such as Branch and Node, were excluded from the analysis despite their faster execution times, as they have been shown to perform poorly compared to LP-GED based approaches and \btight~\cite{Blumenthal}. 
All these heuristic methods furnish an edit path that corresponds to the approximated GED. We utilized the GEDLIB~\cite{blumenthal2019gedlib} implementation of these methods in our evaluations.

\textbf{Datasets:} \pkl{Table~\ref{tab:datasets} lists the datasets we use. App.~\ref{app:datasets} discusses the semantics. \aids, \molhiv, \mutag, \code are labeled whereas \imdb, \coil, \triangles, Netscience and HighSchool are unlabeled.}

\begin{wraptable}{r}{0.52\textwidth}
\scriptsize
\vspace{-5mm}
\caption{Datasets.}\label{tab:datasets}
\vspace{-2mm}
    \centering
    \scalebox{1}{
        \begin{tabular}{lrrrrl}
        \toprule
        \textbf{Name} & \textbf{Avg $|\mathcal{V}|$} & \textbf{Avg   $|\mathcal{E}|$} & \textbf{\# labels} & \textbf{Domain}\\
        \midrule
        \aids & 11.83 & 24.14 & 38 & Biology \\
        \molhiv & 15.47 & 31.86 & 119 & Biology\\
        \mutag & 23.32 & 44.64 & 14 & Biology\\
        \code & 18.61 & 37.42 & 97 & Software\\
        \imdb & 11.49 & 63.74 & - & Movies\\
        \coil & 8.70 & 34.44 & - & Vision\\
        \triangles & 9.11 & 20.16 & - & Synthetic\\
        Netscience & 379 & 914 & - & Collaboration \\
        HighSchool & 327 & 5818 & - & Proximity\\
        \bottomrule
        \end{tabular}}
\vspace{-4mm}
\end{wraptable}
\textbf{Train-Val-Test Splits:} \pkl{As in~\citep{graphedx}, we remove isomorphic graphs from the datasets prior to training neural methods to mitigate isomorphism bias via leakage between training and testing~\citet{isobias}. Further, for each dataset, we restrict to the  graphs of size less than~$25$ to ensure feasibility of ground truth GED computation. \pke{As in~\citep{ranjan&al22} and~\citep{graphedx}, we used MIP-F2~\citep{LEROUGE2017254} with a time limit of~600 seconds for each graph pair and kept pairs that yielded equal lower and upper bounds as ground truth GED.} The training set consists of~5$k$ randomly sampled graph pairs, while the validation and test sets each consist of 1$k$ randomly sampled pairs each.}
%Only pairs with equal lower and upper bounds are included in the evaluation.

\textbf{Cost Settings:} \pke{We evaluate the performance under three different edit cost settings:}
\vspace{-0.1in}
\begin{itemize}
\item \textbf{Case 1 (Nonuniform costs):} The node insertion cost is 3, node deletion cost is 1, edge insertion and deletion costs are 2, and the node substitution cost is 0.
% \vspace{-0.05in}
\item \textbf{Case 2 (Nonuniform costs with substitution):} \pkl{In addition to Case~1, substituting nodes with unequal labels incurs cost. If the substituted node label is the nearest neighbor based on the similarity ranking of node labels, the cost is~1, otherwise~2. As an illustrative case, the distance between labels is taken as the difference between their label IDs.}
% \vspace{-0.05in}
\item \textbf{Case 3 (Uniform costs):} \pke{Node/edge insertion and deletion costs~1, node substitution~0.}
\end{itemize}
\vspace{-0.05in}
\iclr{Cost Settings 1 and 3 closely follow those proposed in \graphedx. We introduce Cost Setting 2 to further increase the difficulty of the task. Unlike the other settings, the cost of an edit operation in this case is non-static, it dynamically varies based on the node labels involved, thereby requiring models to account for contextual variations during alignment. We also evaluate on edits costs inspired from chemistry. The results are discussed in App.~\ref{app:domaincosts}.}

\textbf{Metrics:} We use two metrics to assess GED approximation and interpretability:
\begin{enumerate*}[label=(\roman*)]
    \item Mean Absolute Error (MAE), and
    \item Strict Interpretability (SI).
\end{enumerate*} MAE serves as a metric to quantify the closeness of the predicted GED to the true GED. SI is measured as the fraction of graph pairs for which the predicted GED matches the true GED. A match between the predicted and true GED indicates that the alignment produced by the method is optimal. Consequently, SI reflects the algorithm's ability to produce the optimal node mapping and serves as a measure of interpretability.
\vspace{-0.1in}
\subsection{Benchmarking Accuracy (MAE)}\label{exp:accuracy}
\vspace{-0.1in}
\pke{Table~\ref{tab:accuracy_static_non_uniform} presents approximation accuracy in terms of MAE on benchmark datasets under the non-uniform cost setting (Case~1) and the non-uniform cost with substitution setting (Case~2). Appendix~\ref{app:acc_uniform} shows the comparison under the uniform cost setting and Appendix~\ref{app:unlabelled} shows that on unlabeled datasets. In all cases, \name outperforms all baselines.}

\textbf{Comparison with Supervised Baselines:} \pkl{\name outperforms all supervised baselines---including those providing node alignments---across datasets and cost settings by a large margin. Under the nonuniform cost setting, it achieves up to~44\% lower MAE on \code and a~72\% reduction on \aids compared to the next best method. For nonuniform costs with substitution, the improvement margin ranges from~44\% on \mutag to~63\% on \molhiv. \graphedx, \egsc, and \eric demonstrate the second-best performance.}

\textbf{Comparison with Heuristic Baselines:} \name demonstrates a substantial improvement over heuristic baselines. The margin of improvement exceeds 80\% across all datasets and both cost settings when compared to the next-best method, \adjip. Methods \btight and \compact perform considerably worse than \name.

\pkl{Table~\ref{tab:accuracy_static_non_uniform} further reveals that heuristic baselines fall short of supervised ones, which explains why the community shifted to supervised methods, despite their lack of interpretability, poor generalizability, and costly training. Though heuristic, \name tops supervised baselines and grants interpretability. Contrarily, supervised methods that yield node alignments tend to lag, as they trade accuracy for interpretability. \name makes no such compromise.}

\definecolor{1st}{rgb}{0.8, 1, 0.5}
\definecolor{2nd}{rgb}{1.0, 0.9, 0.3}
\definecolor{3rd}{rgb}{0.9,1,0.7}

\begin{table*}[!h] % we need !h for compression purposes
\vspace{-2mm}
\caption{\pkl{Accuracy comparison among baselines in MAE under different cost settings; green and yellow cells denote the best and second-best performance, respectively, for each dataset.}}\label{tab:accuracy_static_non_uniform}
\vspace{-2mm}
% \caption{RMSE on large datasets.}\label{tab:accuracy2}
% \vspace{-2mm}
\centering
\scalebox{0.85}{
    \begin{tabular}{p{2.5cm}|p{1.2cm}p{1.2cm}p{1.2cm}p{1.2cm}|p{1.2cm}p{1.2cm}p{1.2cm}p{1.2cm}} 
      \toprule
      & \multicolumn{4}{c|}{Cost Setting Case 1} & \multicolumn{4}{c}{Cost Setting Case 2}\\
      \textbf{Methods} & \aids & \molhiv & \code & \mutag & \aids & \molhiv & \code & \mutag\\
      \midrule
      \eric & \cellcolor{2nd}{1.17} & 1.38 & 1.48 & 4.80 & \cellcolor{2nd}{1.25} & \cellcolor{2nd}{1.59} & 1.71 & 1.89\\
      \egsc & 1.35 & 1.58 & 1.65 & \cellcolor{2nd}{1.59} & 1.35 & 1.71 & 1.79 & \cellcolor{2nd}{1.80}\\
      \graphedx & 1.54 & \cellcolor{2nd}{1.36} & \cellcolor{2nd}{1.33} & 2.39 & 2.06 & 2.10 & \cellcolor{2nd}{1.56} & 2.80\\
      \hmn & 1.53 & 2.00 & 1.90 & 1.74 & 1.58 & 2.08 & 2.34 & 2.00\\
      \gmn & 3.35 & 5.25 & 2.68 & 5.52 & 3.64 & 5.83 & 2.67 & 6.34\\
      \greed & 2.98 & 5.03 & 2.48 & 5.12 & 3.39 & 5.36 & 2.62 & 5.32\\
      \simgnn & 1.55 & 1.98 & 1.85 & 1.91 & 1.70 & 2.09 & 2.01 & 2.49\\
      \midrule
      \gedgnn & 2.37 & 4.23 & 2.61 & 2.46 & 2.28 & 3.60 & 3.36 & 3.86\\
      \gotsim & 7.53 & 14.49 & 8.15 & 10.89 & 10.66 & 22.19 & 12.07 & 15.38\\
      \gmsm & 15.04 & 25.57 & 21.16 & 26.81 & 21.08 & 34.12 & 32.49 & 35.59\\
      \midrule
      \btight & 7.97 & 9.86 & 13.91 & 15.02 & 6.95 & 9.95 & 21.47 & 13.62\\
      \adjip & 1.69 & 4.06 & 5.05 & 4.30 & 3.58 & 5.97 & 6.70 & 6.85\\
      \lpged & 5.41 & 10.63 & 6.28 & 10.64 & 5.8 & 13.47 & 11.08 & 13.82\\
      \compact & 2.95 & 7.21 & 8.39 & 7.13 & 6.18 & 10.29 & 12.72 & 10.78\\
      \ipfp & 5.63 & 9.99 & 6.39 & 9.53 & 8.47 & 14.27 & 13.43 & 14.36\\
      \midrule
      \name & \cellcolor{1st}{0.33} & \cellcolor{1st}{0.65} & \cellcolor{1st}{0.75} & \cellcolor{1st}{0.68} & \cellcolor{1st}{0.58} & \cellcolor{1st}{0.79} & \cellcolor{1st}{0.58} & \cellcolor{1st}{1.01}\\
      \bottomrule
    \end{tabular}}
\vspace{-3mm}
\end{table*}

\textbf{Unlabeled datasets:} \pke{We observed a similar trend on unlabeled data, as shown in App~\ref{app:unlabelled}, \name achieving an even greater margin of improvement. That is expected, as the absence of node features limits the effectiveness of GNN-based methods, which distinguish nodes by features. We note the highest improvement with IMDB dataset, which is also the densest. High density causes oversquashing in GNNs~\cite{oversquashing}, and is a likely reason for subpar performance of neural models.}
\vspace{-0.2in}
\subsection{Accuracy (SI)}
\vspace{-0.1in}
Table~\ref{tab:si} presents the comparison of \name with other baselines in terms of the Strict Interpretability (SI) metric. While few neural baselines do not explicitly provide alignments, we found the SI score for all supervised methods to be~0 across all cost settings. \pke{This finding indicates that, albeit some neural methods provide explicit node alignments, they fall short in alignment quality.} We thus omit these scores from the table. \name consistently achieves higher~SI scores compared to other heuristic methods, with an improvement of up to~69\% on the \code dataset under cost setting Case~1. These superior~SI scores highlight \name's ability to deliver optimal node alignments. Although supervised baselines generally provide better GED approximations than heuristic methods, heuristic baselines offer better interpretability. \name surpasses all baselines in both approximation accuracy and interpretability metrics, establishing itself as the new state-of-the-art for GED approximation while maintaining interpretability of the approximated GED.

\definecolor{1st}{rgb}{0.8, 1, 0.5}
\definecolor{2nd}{rgb}{1.0, 0.9, 0.3}
\definecolor{3rd}{rgb}{0.9,1,0.7}

\begin{table*}[!h] % we need !h for compression purposes
\vspace{-3mm}
\caption{\pkl{Accuracy comparison in terms of SI under different cost settings; green and yellow cells denote the best and second-best performance, respectively, for each dataset.}}\label{tab:si}
\vspace{-2mm}
% \caption{RMSE on large datasets.}\label{tab:accuracy2}
% \vspace{-2mm}
\centering
\scalebox{0.8}{
    \begin{tabular}{p{2.5cm}|p{1.2cm}p{1.2cm}p{1.2cm}p{1.2cm}|p{1.2cm}p{1.2cm}p{1.2cm}p{1.2cm}} 
      \toprule
      & \multicolumn{4}{c|}{Cost Setting Case 1} & \multicolumn{4}{c}{Cost Setting Case 2}\\
      \textbf{Methods} & \aids & \molhiv & \code & \mutag & \aids & \molhiv & \code & \mutag\\
      \midrule
      \btight & 0.02 & 0.02 & 0.01 & 0.01 & 0.01 & 0.01 & 0.01 & 0.01\\
      \adjip & \cellcolor{2nd}{0.90} & \cellcolor{2nd}{0.69} & \cellcolor{2nd}{0.48} & \cellcolor{2nd}{0.62} & \cellcolor{2nd}{0.69} & \cellcolor{2nd}{0.65} & \cellcolor{2nd}{0.63} & \cellcolor{2nd}{0.46}\\
      \lpged & 0.44 & 0.10 & 0.05 & 0.04 & 0.57 & 0.15 & 0.03 & 0.07\\
      \compact & 0.72 & 0.31 & 0.03 & 0.20 & 0.46 & 0.31 & 0.16 & 0.24\\
      \ipfp & 0.04 & 0.02 & 0.03 & 0.02 & 0.01 & 0.01 & 0.01 & 0.01\\ 
      \midrule
      \name & \cellcolor{1st}{0.91} & \cellcolor{1st}{0.84} & \cellcolor{1st}{0.82} & \cellcolor{1st}{0.83} & \cellcolor{1st}{0.71} & \cellcolor{1st}{0.67} & \cellcolor{1st}{0.74} & \cellcolor{1st}{0.59}\\
      \bottomrule
    \end{tabular}}
\vspace{-4mm}
\end{table*}
%\vspace{-0.1in}
\subsection{Accuracy on Large Graphs}\label{sec:large_acc}
\vspace{-0.1in}
\pkl{The complexity of GED estimation rises with graph size due to the exponential growth of mappings in combinatorial space. We evaluate performance exclusively on large graphs to explicitly investigate this aspect of scalability. 
%Since the acquisition of training data for GED learning is resource-intensive due to the NP-hardness of GED computations, supervised approaches typically need train on smaller graphs and apply the acquired knowledge to larger graphs of previously unseen sizes. 
We consider graphs with sizes in the range~$[25, 50]$ in the test split. Table~\ref{tab:mae_unseen} presents the MAE results under Case~1 and Case~2 cost settings, which demonstrate the superior scalability of \name to large graphs, with up to~66\% lower MAE than the next best performer, \hmn. Other methods exhibit significantly higher MAE. These findings underscore the practical applicability of \name for GED approximation on large graphs. SI comparison on large graphs appears in Appendix~\ref{app:si_unseen}.}

\definecolor{1st}{rgb}{0.8, 1, 0.5}
\definecolor{2nd}{rgb}{1.0, 0.9, 0.3}
\definecolor{3rd}{rgb}{0.9,1,0.7}

\begin{table*}[!h] % we need !h for compression purposes
\vspace{-2mm}
\caption{\pkl{Accuracy among baselines in MAE under different cost settings; graph sizes in~$[25,50]$; green and yellow cells denote best and second-best performance, respectively.}}\label{tab:mae_unseen}
\vspace{-2mm}
% \caption{RMSE on large datasets.}\label{tab:accuracy2}
% \vspace{-2mm}
\centering
\scalebox{0.85}{
    \begin{tabular}{p{2.5cm}|p{1.2cm}p{1.2cm}p{1.2cm}p{1.2cm}|p{1.2cm}p{1.2cm}p{1.2cm}p{1.2cm}} 
      \toprule
      & \multicolumn{4}{c|}{Cost Setting Case 1} & \multicolumn{4}{c}{Cost Setting Case 2}\\
      \textbf{Methods} & \aids & \molhiv & \code & \mutag & \aids & \molhiv & \code & \mutag\\
      \midrule
      \eric & 19.70 & 9.08 & 12.24 & 14.64 & 18.46 & 14.08 & 29.14 & 9.47\\
      \egsc & 35.68 & 12.68 & 15.02 & 15.12 & 30.22 & 16.92 & 16.04 & 14.31\\
      \graphedx & 24.44 & 21.65 & 33.01 & 21.82 & 20.75 & 17.01 & 34.01 & 15.98 \\
      \hmn & \cellcolor{2nd}{6.48} & \cellcolor{2nd}{4.59} & \cellcolor{2nd}{5.70} & \cellcolor{2nd}{3.44} & 10.86 & \cellcolor{2nd}{5.15} & \cellcolor{2nd}{10.42} & \cellcolor{1st}{4.54}\\
      \gmn & 9.60 & 10.82 & 8.52 & 9.80 & 9.99 & 13.68 & 14.57 & 11.03\\
      \greed & 10.05 & 10.20 & 8.46 & 9.28 & \cellcolor{2nd}{9.66} & 9.50 & 12.09 & 9.92\\
      \simgnn & 28.77 & 10.58 & 14.02 & 7.52 & 25.61 & 12.63 & 50.51 & 12.70\\
      \midrule
      \gedgnn & 25.78 & 11.83 & 36.75 & 19.96 & 23.29 & 15.27 & 25.17 & 17.18\\
      \gotsim & 29.03 & 25.93 & 26.87 & 24.62 & 29.78 & 32.47 & 31.58 & 30.48\\
      \gmsm & 44.66 & 44.62 & 49.65 & 44.22 & 21.08 & 50.90 & 66.06 & 55.94\\
      \midrule
      \btight & 29.76 & 24.95 & 31.54 & 27.86 & 26.62 & 23.23 & 26.27 & 28.72\\
      \adjip & 23.00 & 21.98 & 34.52 & 21.54 & 17.81 & 11.95 & 46.42 & 17.00\\
      \lpged & 23.22 & 11.19 & 21.92 & 15.05 & 30.32 & 11.56 & 42.86 & 17.95\\
      \compact & 73.30 & 40.02 & 76.71 & 56.84 & 59.33 & 28.95 & 47.20 & 41.18\\
      \ipfp & 17.86 & 14.65 & 16.51 & 16.48 & 18.65 & 18.47 & 24.88 & 20.16 \\
      \midrule
      \name & \cellcolor{1st}{4.45} & \cellcolor{1st}{3.88} & \cellcolor{1st}{4.14} & \cellcolor{1st}{2.80} & \cellcolor{1st}{3.25} & \cellcolor{1st}{3.73} & \cellcolor{1st}{4.33} & \cellcolor{2nd}{4.74}\\
      \bottomrule
    \end{tabular}}
\vspace{-2mm}
\end{table*}

\pkl{Figure~\ref{fig:heatmap_code2} presents MAE heatmaps on \code for cost setting Case~1. Each point stands for a graph pair~$\mathcal{G}_{Q}$, $\mathcal{G}_{T}$ with coordinates~(GED($\mathcal{G}_{Q}$, $\mathcal{G}_{T}$), ($|\mathcal{V}_Q|$ + $|\mathcal{V}_T|$)/2). Heatmaps for \egsc, \hmn, and \graphedx have a discernibly darker hue,
%Moreover, darker shades are more pronounced in the bottom row where we focus on graphs with more than 25 nodes. %upper and right portions, implying a deterioration in performance as query sizes and GED values rise. Consequently, 
corroborating that \name enjoys better scalability in graph size and GED value. Appendix~\ref{app:heatmaps} shows heatmaps for other datasets, while \pke{Appendix~\ref{app:vlarge_graphs} presents results on two thousand-scale collaboration networks, Netscience~\citep{netscience} and HighSchool~\citep{highschool}. To our knowledge, no prior GED estimation method handles graphs of this scale.}}

\begin{figure*}[!h]
\vspace{-4mm}
\centering
%\subfloat[\name]{\includegraphics[width =1.8in]{figures_heatmaps/data_static_non_uniform/Eugene_ogbg-code2_heatmap_25.pdf}}
%\subfloat[\egsc]{\includegraphics[width=1.8in]{figures_heatmaps/data_static_non_uniform/EGSC_ogbg-code2_heatmap_25.pdf}}
%\subfloat[\hmn]{\includegraphics[width =1.8in]{figures_heatmaps/data_static_non_uniform/H2MN_ogbg-code2_heatmap_25.pdf}}
%\subfloat[\graphedx]{\includegraphics[width =1.8in]{figures_heatmaps/data_static_non_uniform/GraphEDX_ogbg-code2_heatmap_25.pdf}}\\
\subfloat[\name]{\includegraphics[width =1.4in]{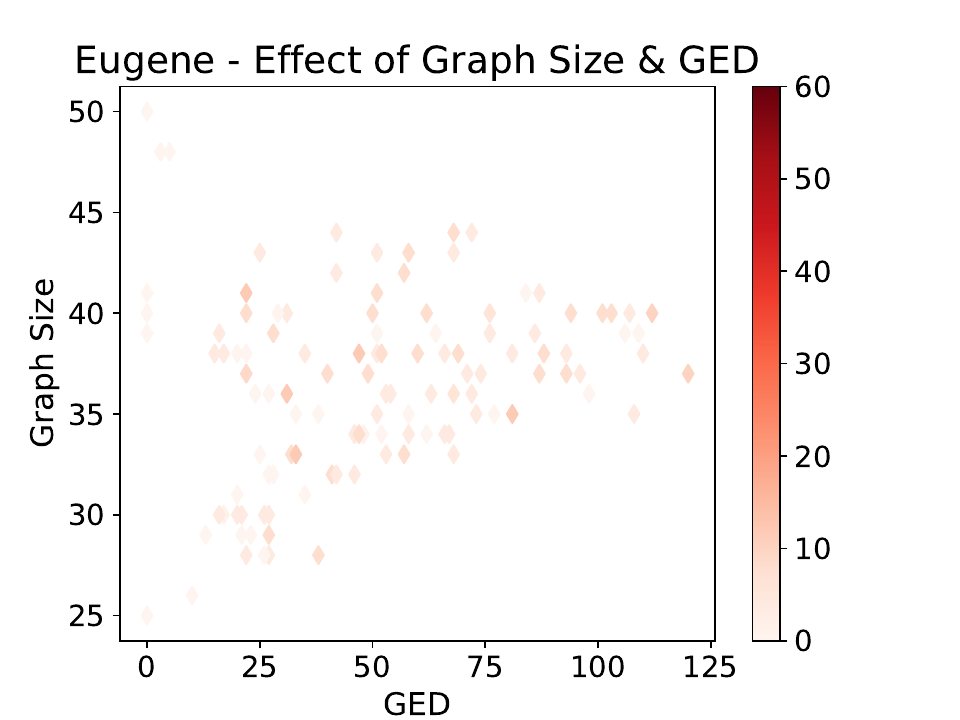}}
\subfloat[\egsc]{\includegraphics[width=1.4in]{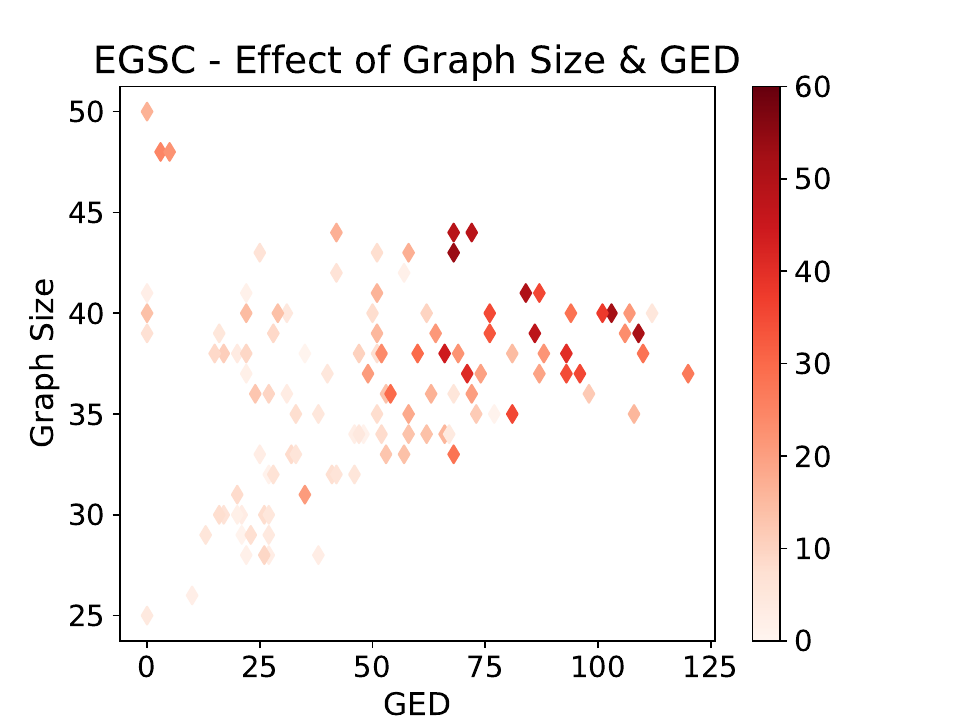}}
\subfloat[\hmn]{\includegraphics[width =1.4in]{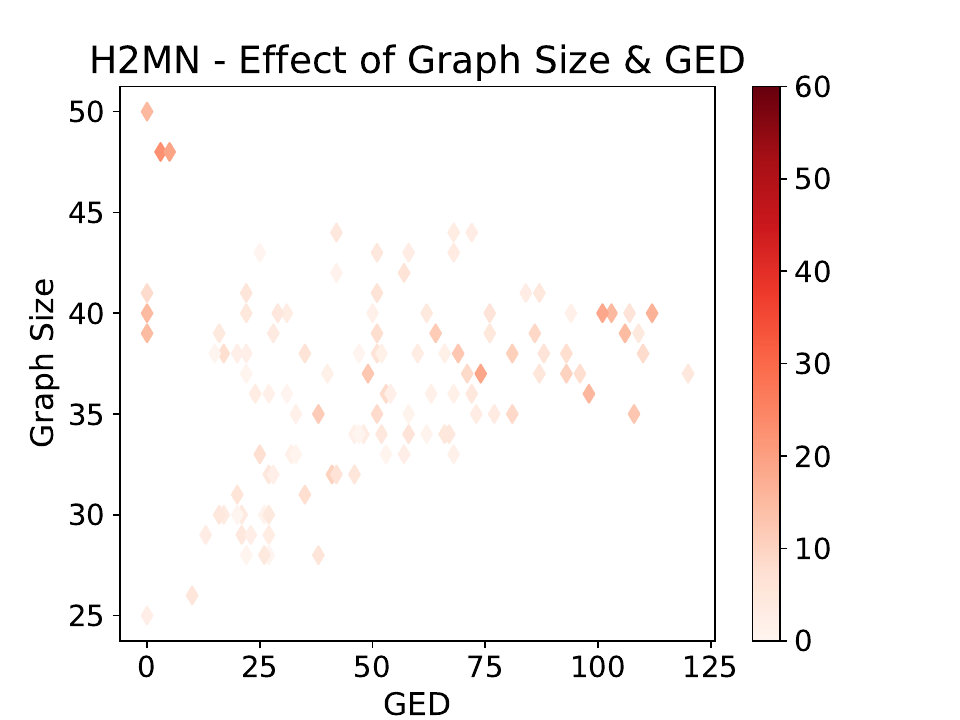}}
\subfloat[\graphedx]{\includegraphics[width =1.4in]{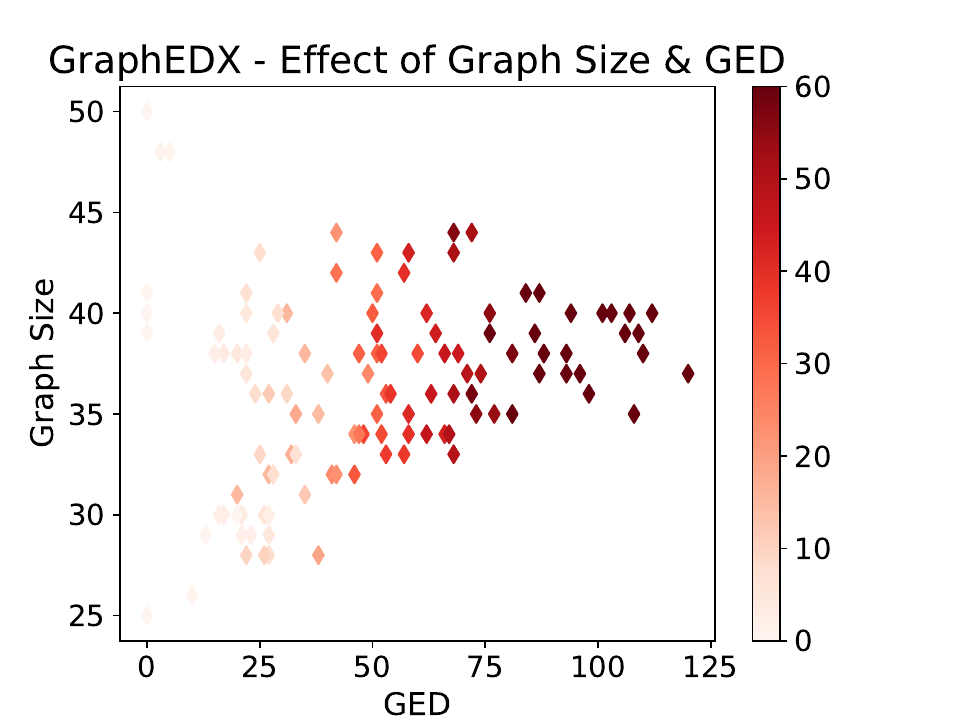}}
\vspace{-3mm}
\caption{MAE heatmap vs. graph size \& GED for \code for graphs of size $[25, 50]$.}\label{fig:heatmap_code2}
\vspace{-4mm}
\end{figure*}
\vspace{-0.1in}
\subsection{Comparison with \fugal}
\vspace{-0.1in}
\fugal addresses unrestricted graph alignment (UGA), while \name estimates GED and produces an alignment corresponding to the approximation. As Theorem~\ref{thm:equivalence} shows, UGA is a special case of GED with node edit costs set to zero. The connection between UGA and GED established in Theorem~\ref{thm:equivalence2} allows us to draw from UGA methods, though our optimization differs in key ways:

\begin{wraptable}{r}{0.55\textwidth}
\centering
\vspace{-0.2in}
\captionsetup{font=small}
\caption{GED estimation error (MAE) under Cost Setting~1.}
\label{tab:nodecost}
\scalebox{0.8}{
\begin{tabular}{lcccc}
\toprule
Method & \aids & \molhiv & \code & \mutag \\
\midrule
\fugal                 & 7.12 & 11.72 & 6.53 & 11.60 \\
\fugal-Node Edit Costs & 7.71 & 12.65 & 7.93 & 12.49 \\
\name                & \textbf{0.33} & \textbf{0.65} & \textbf{0.75} & \textbf{0.68} \\
\bottomrule
\end{tabular}}
\vspace{-0.1in}
\end{wraptable}
\vspace{-0.1in}
\paragraph{Optimization.} \name employs a modified Adam optimizer with a penalty method to enforce doubly stochastic constraints, whereas UGA methods typically use Frank-Wolfe~\cite{frank1956algorithm} with Sinkhorn-Knopp normalization~\cite{cuturi2013sinkhorn}. As shown in Table~\ref{tab:fw}, replacing Adam with Frank--Wolfe (\name-FW) leads to weaker performance, confirming the effectiveness of our approach. Our novel \emph{inverse relabelling} strategy further improves GED estimation (\S~\ref{app:ablation}).
\vspace{-0.1in}
\paragraph{Cost Regularizer.} \name integrates node edit costs through a matrix~$D$, while UGA methods may only use similar terms as structural regularizers. To test whether \fugal could benefit from node edit costs, we evaluated it with \name's cost matrix~$D$. Table~\ref{tab:nodecost} shows that both \fugal variants yield substantially higher GED error than \name.  

\begin{wraptable}{r}{0.5\textwidth}
\centering
\vspace{-0.25in}
\captionsetup{font=small}
\caption{GED estimation error (MAE) under zero node edit costs (UGA setting).}
\label{tab:zeronode}
\scalebox{0.8}{
\begin{tabular}{lcccc}
\toprule
Method      & \aids & \molhiv & \code & \mutag \\
\midrule
\fugal  & 4.71 & 6.98   & 8.52  & 8.44  \\
\name & \textbf{0.28} & \textbf{0.50} & \textbf{0.74} & \textbf{0.55} \\
\bottomrule
\end{tabular}}
\vspace{-0.1in}
\end{wraptable}

One might still believe that \fugal is inherently tailored for GED instances with zero node edit costs, corresponding to UGA. We thus set all node edit costs to~0 and edge edit costs to~1. Even under this UGA-compatible setting, \name demonstrated superior performance, as shown in Table~\ref{tab:zeronode}.

\begin{wraptable}{r}{0.5\textwidth}
\vspace{-0.2in}
\centering
\captionsetup{font=small}
\caption{Replacing \name’s Frobenius norm with FUGAL’s non-convex correlation term.}
\label{tab:core}
\vspace{-0.1in}
\scalebox{0.7}{
\begin{tabular}{lcccc}
\toprule
Method             & \aids & \molhiv & \code & \mutag \\
\midrule
\name (\fugal QAP) & 5.43 & 7.53   & 17.82 & 12.65 \\
\name             & \textbf{0.33} & \textbf{0.65} & \textbf{0.75} & \textbf{0.68} \\
\bottomrule
\end{tabular}}
\vspace{-0.1in}
\end{wraptable}

This raises the question of why the poor GED estimates from UGA methods are not evident in UGA studies. The key difference lies in evaluation: GED is evaluated strictly by edge and node differences from the ground truth (the QAP objective), while UGA is evaluated more loosely by the fraction of correctly aligned nodes. Hence, GED methods must enforce much stricter fidelity to the QAP objective than UGA methods, as we discuss in the following.

\paragraph{Core Objective Term.} \name prioritizes the convex Frobenius norm $\|A P - P B \|_F^2$,
which ensures stable updates. UGA methods instead optimize the non-convex correlation term $\mathrm{Tr}(A P B^\top P^\top)$ for efficiency, paired with Frank-Wolfe. Substituting this non-convex term into \name caused divergence; even the best result within a 10-minute cap (Table~\ref{tab:core}) remained far less accurate. This confirms that FUGAL’s core objective is ill-suited for GED estimation.
\vspace{-0.1in}
\section{Conclusions}\label{sec:conclusion}
\vspace{-0.1in}
\pke{We introduced \name, an optimization based heuristic method that provides explainable estimates of GED based on a structure-aware representation and relaxation of the underlying optimization problem. Through extensive experimentation, we demonstrated that \name achieves state-of-the-art GED estimates and superior scalability compared to baselines across diverse datasets, even while it eliminates the need to generate supervisory data via \NPhard computations. These features position \name as a promising candidate for practical graph similarity measurement. As our implementation relies solely on CPU resources, it is open to further enhancement.}
%Future work aims to extend \name to graph alignment tasks beyond GED estimation.

\section{Reproducibility Statement}
We have made the implementation of \name publicly available; the code link is provided at the end of Page~6. The released implementation includes the benchmark test sets, as well as the training and validation sets used for the neural models. We also provide scripts to generate new test sets for independent evaluation. Details on data generation, testing setup, and baseline implementations are described in Section~\ref{sec:exp}. Appendix~\ref{app:setup} specifies the hardware and software environment, and Appendix~\ref{app:parameters} lists the parameters used by \name.

% \clearpage
% \section{Other}

% \subsubsection{Appendices.}
% Any appendices must appear after the main content. If your main sections are numbered, appendix sections must use letters instead of arabic numerals. In \LaTeX{} you can use the \texttt{\textbackslash appendix} command to achieve this effect and then use \texttt{\textbackslash section\{Heading\}} normally for your appendix sections.

% \subsubsection{Ethical Statement.}
% You can write a statement about the potential ethical impact of your work, including its broad societal implications, both positive and negative. If included, such statement must be written in an unnumbered section titled \emph{Ethical Statement}.

\bibliographystyle{iclr26/iclr2026_conference}
\bibliography{references}
% clean the garbage!
%\bibliographystyle{icml/icml2025}
%\input{sections/99.references}
\clearpage
%\appendix
\onecolumn
\section{Appendix}\label{sec:appendix}
\renewcommand{\thesubsection}{\Alph{subsection}}
\renewcommand{\thefigure}{\Alph{figure}}
\renewcommand{\thetable}{\Alph{table}}
\begin{figure}[t]
\vspace{-2mm}
\centering
\includegraphics[width=3.3in]{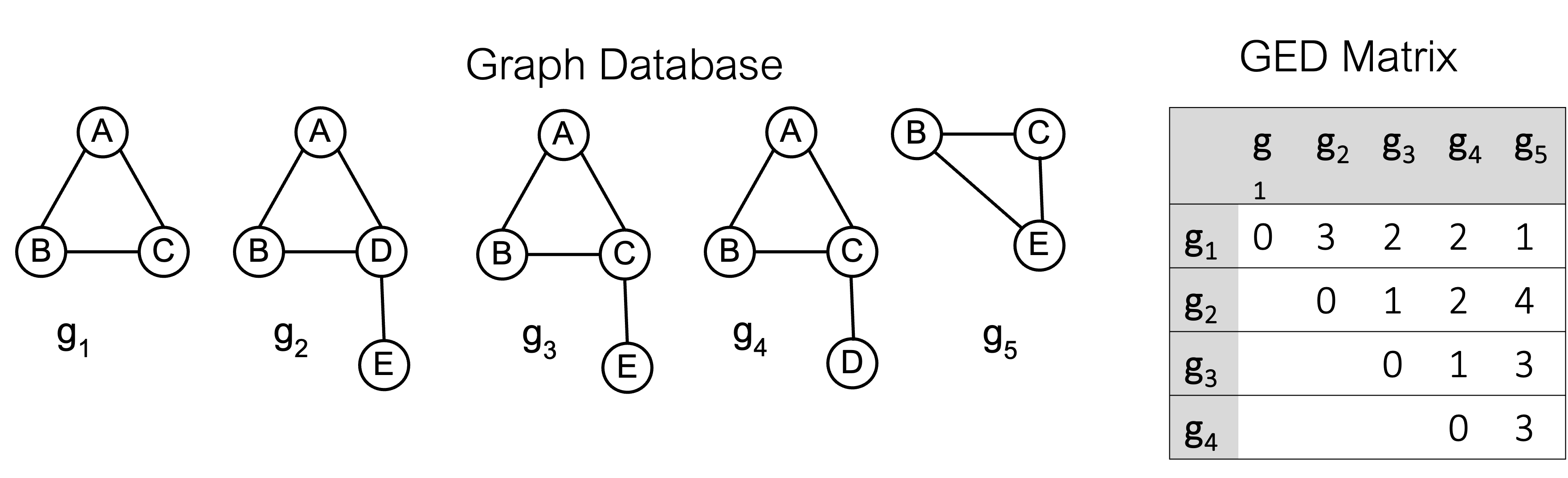}
\vspace{-2mm}
\caption{GED among five graphs; all edit operations cost~1.}\label{fig:ged}
\vspace{-2mm}
\end{figure}

\subsection{Related Work}

\textbf{Supervised Methods:} \graphedx~\cite{graphedx} represents each graph as a set of node and edge embeddings and learn the alignments using a Gumbel-Sinkhorn permutation generator, additionally ensuring that the node and edge alignments are consistent with each other. \greed~\cite{ranjan&al22} employs a siamese network to generate graph embeddings in parallel and estimates the Graph Edit Distance (GED) by computing the norm of their difference. \eric~\cite{eric} eliminates the need for explicit node alignment by leveraging a regularizer and computes similarity using a Neural Tensor Network (NTN) and a Multi-Layer Perceptron (MLP) applied to graph-level embeddings obtained from a Graph Isomorphism Network (GIN). \textsc{GMN}~\cite{icmlged} assess graph similarity using Euclidean distance between embeddings and exist in two variants: \gmn (late interaction) and \textsc{GMN-Match} (early interaction), both utilizing message passing to capture structural similarities.  \simgnn~\cite{simgnn} combines graph-level and node-level embeddings, where a Neural Tensor Network processes graph-level embeddings, while a histogram-based feature vector derived from node similarities enhances the similarity computation. \hmn~\cite{h2mn} utilizes hypergraphs to model higher-order node similarity, employing a subgraph matching module at each convolution step before aggregating the final graph embeddings via a readout function and passing them through an MLP. \egsc~\cite{egsc} introduces an Embedding Fusion Network (EFN) within a Graph Isomorphism Network (GIN) to generate unified embeddings for graph pairs, which are further processed through an EFN and an MLP to compute the final similarity score. \nips{\gotsim~\cite{graphotsim} approximates GED through a neural network, and simultaneously learns the alignments. Specifically, it formulates the similarity between a pair of graphs as the minimal “transformation” cost from one graph to another in the learnable node-embedding space. \gedgnn~\cite{gedgnn} treats GED computation as a regression task and predict the GED value. A post-processing algorithm based on $k$-best matching is used to extract node mapping. \gmsm~\cite{gmsm} uses regularized optimal transport with GNNs to approximate GED.}

\textbf{Heuristic Methods:} \lpged~\cite{LEROUGE2017254}, \adjip~\cite{adjip}, and \compact~\cite{blumenthal2020exact} employ a mixed integer programming framework based on the LP-GED paradigm to approximate the GED. In contrast, \btight~\cite{btight} iteratively solves instances of the linear sum assignment problem or the minimum-cost perfect bipartite matching problem. \nips{\ipfp~\cite{ipfp} models GED as a quadratic assignment problem and uses Integer Projected Fixed Point method to aproximate the QAP.}

\subsection{Proofs}\label{app:proofs}

\uga*
%\begin{proof}
%By Lemma~\ref{lemma:ged}, ED($\pi$) = $\|AP - PB\|^{2}$. It is GED = $\min_{\pi}$ ED($\pi$) = $\min_{\pi \in \phi(\CG_1,\CG_2)} \|AP_{\pi} - P_{\pi}B\|^{2}_{F}$, since there is a one-to-one mapping between alignment functions~$\pi$ and permutation matrices~$\pi$%.; then GED($\mathcal{G}_1$, $\mathcal{G}_2$) = CD($\mathcal{G}_1$, $\mathcal{G}_2$)$^{2}$.}
%\end{proof}
%We show that by expressing the objective of Graph Alignment between two graphs~$\mathcal{G}_{1}$ and~$\mathcal{G}_{2}$, defined in Equation~\eqref{cd}, as an instance of GED. Consider an instance of GED between graphs~$\mathcal{G}_{1}$ and~$\mathcal{G}_{2}$ whose adjacency matrices are~$A, B$ respectively, 
% defined as follows:}
% \begin{itemize}
%     \item Node insertion cost = 0
%     \item Node deletion cost = 0
%     \item Node substitution cost = 0
%     \item Edge insertion cost = 2
%     \item Edge deletion cost = 2
%     \item Edge substitution cost = 0
% \end{itemize}
%\nips{We show that the GED between~$\mathcal{G}_{1}$ and~$\mathcal{G}_{2}$ by these edit costs is equal to the objective of Graph Alignment.}
%\begin{lemma}\label{lemma:ged}
%\pk{Given graphs~$\mathcal{G}_{1}$, $\mathcal{G}_{2}$ of size~$n$ and a node alignment function~$p: [n] \rightarrow [n]$ mapping each node~$i$ in~$\mathcal{G}_{1}$ to a node~$\pi(i)$ in~$\mathcal{G}_{2}$, the edit distance corresponding to the node alignment by the aforementioned edit costs is equal to~$\|AP - PB\|^{2}_{F}$, where~$\pi$ is a permutation matrix with~$P_{ij} = 1$ if~$\pi(i) = j$, otherwise~0.}
%\end{lemma}
\vspace{-3mm}
\begin{proof}
% In graph alignment we want to find the $P$ (equivalently $\pi$) minimizing Eq.~\ref{eq:alignment}.
\pke{We derive the set of edge insertions and deletions to convert~$\mathcal{G}_{1}$ to~$\mathcal{G}_{2}$ from~$\pi$. An edge that should be inserted between nodes~$i$ and~$j$ in~$\mathcal{G}_{1}$ does not exist in~$A$ but exists in~$B$, hence~$a_{ij} = 0$ and~$b_{\pi(i)\pi(j)} = 1$. Likewise, an edge that needs deletion has~$a_{ij} = 1$ and~$b_{\pi(i)\pi(j)} = 0$. All other~$(i, j)$ pairs have~$a_{ij} = b_{\pi(i)\pi(j)}$. Let~$\mathcal{E}_{ins}$ be the set of edges to be inserted in~$\mathcal{G}_{1}$ and~$\mathcal{E}_{del}$ that of edges to be deleted by~$\pi$, where without loss of generality an edge~$(i,j)$ has~$i < j$. As node edit and edge substitutions cost~$0$, the~$GED_\pi(\mathcal{G}_{1},\mathcal{G}_{2})$ with respect to edit operations induced by~$\pi$ is:}
%\[
%\text{ED($\pi$)} = \sum_{\mathclap{(i, j) \in \mathcal{E}_{ins}}} c(\edgeins(\pi(i), \pi(j))) + \sum_{\mathclap{(i, j) \in \mathcal{E}_{del}}} c(\edgedel(i, j))
%\]
%In this expression, $c(\edgeins(\pi(i), \pi(j)))$ and~$c(\edgedel(i, j))$ denote the costs of edge insertion and deletion between nodes~$i$ and~$j$, respectively. By the defined costs, we get:
{\small
\begin{multline}
\nonumber
GED_\pi(\mathcal{G}_1, \mathcal{G}_2) = \sum_{(i, j) \in \mathcal{E}_{ins}} 2 + \sum_{(i, j) \in \mathcal{E}_{del}} 2
= \sum_{(i, j) \in \mathcal{E}_{ins}} 2 \cdot (a_{ij} - b_{\pi(i)\pi(j)})^2 
+ \sum_{(i, j) \in \mathcal{E}_{del}} 2 \cdot (a_{ij} - b_{\pi(i)\pi(j)})^2 \\
= \sum_{(i, j) \in \mathcal{E}_{ins}} \big((a_{ij} - b_{\pi(i)\pi(j)})^2 + (a_{ji} - b_{\pi(j)\pi(i)})^2\big) \hfill
% \end{multline}
% \begin{multline}
+ \sum_{(i, j) \in \mathcal{E}_{del}} \big((a_{ij} - b_{\pi(i)\pi(j)})^2 + (a_{ji} - b_{\pi(j)\pi(i)})^2\big) \hfill \\
+ \sum_{i < j, (i, j) \notin \mathcal{E}_{del} \cup \mathcal{E}_{ins}} \big((a_{ij} - b_{\pi(i)\pi(j)})^2 + (a_{ji} - b_{\pi(j)\pi(i)})^2\big) \hfill\\
= \sum_{(i, j) \in [n] \times [n]} (a_{ij} - b_{\pi(i)\pi(j)})^2
= \|A - P_\pi BP_\pi^T\|_F^2 = \|AP_\pi - P_\pi B\|_F^2 \hfill
\label{eq:ptoP}
\end{multline}}

\pke{By the given edit costs, $GED(\mathcal{G}_{1},\mathcal{G}_{2}) = \min_{\pi} \left\{GED_\pi(\mathcal{G}_{1},\mathcal{G}_{2})\right\}$, hence,}

\begin{equation*}
GED(\mathcal{G}_{1},\mathcal{G}_{2}) = \min_{\pi \in \Phi(\CG_1,\CG_2)} \|AP_{\pi} - P_{\pi}B\|^{2}_{F}
\end{equation*}
\end{proof}

\geduga*
\begin{proof}
%Similar to the proof of Thm.~\ref{thm:equivalence}, let $\pi: \CV_1 \rightarrow \CV_2$ be a node alignment function, where~$\pi(i) = j$ if and only if~$P_{ij} = 1$.
\pke{We first reformulate Equation~\ref{ged_chdist} as follows:}
\[
\scalebox{0.99}{$\frac{||\tilde{A}P_{\pi} - P_{\pi}\tilde{B}||_F^{2}}{2} + tr(P_{\pi}^{T}D) = \frac{||\tilde{A} - P_{\pi}\tilde{B}P_{\pi}^{T}||_F^{2}}{2} + tr(P_{\pi}^{T}D)$}
\]
\pke{Using the node-alignment function~$\pi$, we reformulate the above to:}
\[
\textstyle\sum_{(i, j) \in [n] \times [n]} \kappa^2 \cdot \frac{(a_{ij} - b_{\pi(i)\pi(j)})^2}{2} + \sum_{i \in [n]} d_{i, \pi(i)}
\]
\pke{Further manipulation via the definition of matrix~$D$ gives:}
\[
\begin{split}
\small \sum_{\substack{(i, j) \in [n] \times [n]}} \!\!\!\!\!\!\! \kappa^2 \cdot \frac{(a_{ij} - b_{\pi(i)\pi(j)})^2}{2} +\!\!\!\!\!\!\!\!
\sum_{\substack{i \in \mathcal{G}_{1} \footnotesize \text{is a dummy}}} \!\!\!\!\! d_{v}(\epsilon, \mathcal{L}(\pi(i))) +\!\!\!\!\!\!\!\!\!\!\!\!\!\!\!\!
\sum_{\substack{i \in \mathcal{G}_{1} \footnotesize \text{mapped to dummy } \pi(i)}} \!\!\!\!\!\!\!\!\!\!\!\!\!\!\! d_{v}(\mathcal{L}(i), \epsilon) +\!\!\!\!\!\!\!\!
\sum_{\substack{\mathcal{L}(i) \neq \mathcal{L}(\pi(i))}} \!\!\!\!\!\! d_{v}(\mathcal{L}(i), \mathcal{L}(\pi(i)))
\vspace{-3mm}
\end{split}
\]
\pke{Notably, for any~$(i, j) \in [n] \!\times\! [n]$, if~$a_{ij} \!=\! 0$ and~$b_{\pi(i)\pi(j)} \!=\! 1$, an~$(i, j)$ edge should be inserted. Likewise, if~$a_{ij} \!=\! 1$ and~$b_{\pi(i)\pi(j)} \!=\! 0$, edge~$(i, j)$ should be deleted. Otherwise, if~$a_{ij} \!=\! b_{\pi(i)\pi(j)}$, the term evaluates to~0. Besides, a dummy node~$i$ in~$\mathcal{G}_{1}$ should be inserted with~$\pi(i)$ as the corresponding node in~$\mathcal{G}_{2}$, while a node~$i$ mapped to a dummy node~$\pi(i)$ should be deleted. In the event that none of these conditions apply, node~$i$ is substituted with node~$\pi(i)$. We thus simplify the expression to:}
\vspace{-1mm}
\[
\begin{split}
\small
\sum_{\mathclap{(i, j) \text{ inserted}}} \kappa^2 \cdot \frac{b_{\pi(i)\pi(j)}^2 + b_{\pi(j)\pi(i)}^2}{2} + \sum_{\mathclap{(i, j) \text{ deleted}}} \kappa^2 \cdot \frac{a_{ij}^2 + a_{ji}^2}{2} + 
\sum_{\substack{i \in \mathcal{G}_{1} \text{is inserted}}} d_{v}(\epsilon, \mathcal{L}(\pi(i)))
+\\ \sum_{\substack{i \in \mathcal{G}_{1} \text{is deleted}}} d_{v}(\mathcal{L}(i), \epsilon) + \sum_{\substack{i \in \mathcal{G}_{1}\text{ is replaced} \text{with }\pi(i)}}d_{v}(\mathcal{L}(i), \mathcal{L}(\pi(i)))
\end{split}
\]
\pke{Substituting the values, we obtain:}
% \vspace{-0.1in}
\begin{alignat}{2}
\nonumber
\begin{split}
\sum_{\substack{(i, j) \text{ inserted}}} \!\!\!\!\!\! \kappa^2 + \!\!\!
\sum_{\substack{(i, j) \text{ deleted}}}  \!\!\!\!\!\! \kappa^2 + \!\!\!
%\sum_{(i, j) \text{ is substituted}} 0 +
\sum_{\substack{i \in \mathcal{G}_{1} \text{is inserted}}} \!\!\!\!\!\!\!\! d_{v}(\epsilon, \mathcal{L}(\pi(i))) + \!\!\!
\sum_{\substack{i \in \mathcal{G}_{1} \text{is deleted}}} \!\!\!\!\!\!\!\! d_{v}(\mathcal{L}(i), \epsilon) + \!\!\!\!\!\!\!\!\!\!\!\!\!\!\!
\sum_{\substack{i \in \mathcal{G}_{1}\text{ is replaced with }\pi(i)}} \!\!\!\!\!\!\!\!\!\!\!\!\!\! d_{v}(\mathcal{L}(i), \mathcal{L}(\pi(i)))
\end{split}\\
&=GED_{\pi}(\mathcal{G}_1,\mathcal{G}_2)
\end{alignat}
Since \small{$GED(\mathcal{G}_{1},\mathcal{G}_{2}) = \min_{\pi} \left\{GED_\pi(\mathcal{G}_{1},\mathcal{G}_{2})\right\}$} and \small{$\min_{\substack{\pi \in \Phi(\CG_1,\CG_2)}} \frac{||\tilde{A}P_{\pi} - P_{\pi}\tilde{B}||_F^{2}}{2} + tr(P_{\pi}^{T}D)= GED_\pi(\mathcal{G}_{1},\mathcal{G}_{2})$, $GED(\mathcal{G}_{1},\mathcal{G}_{2}) =\min_{\pi \in \Phi(\CG_1,\CG_2)} \frac{||\tilde{A}P_{\pi} - P_{\pi}\tilde{B}||_F^{2}}{2} + tr\left(P_{\pi}^{T}D\right)$. }
%Furthermore, from Eq.~\ref{eq:ptoP}, $\min_{\pi}\left\{GED_\pi(\mathcal{G}_{1},\mathcal{G}_{2})\right\}=\min_{P \in \mathbb{P}^{n}}\|AP - PB\|^{2}$. Hence, $GED(\mathcal{G}_{1},\mathcal{G}_{2}) = \min_{P \in \mathbb{P}^{n}}\|AP - PB\|^{2}$.
%This result is the \textsc{Mc-ED} matching node alignment~$p$.
\end{proof}
\doublystoch*
\begin{proof}
Given that~$tr(A^{T}(\pk{J} \!-\! A)) \!=\! 0$, it follows that $\sum_{i} \sum_{j} a_{ij} \cdot (1 \!-\! a_{ij}) \!=\! 0$. Since~$A$ is doubly-stochastic, $0 \leq a_{ij} \leq 1$ for all~$i$ and~$j$, hence~$a_{ij} \!\cdot\! (1 \!-\! a_{ij})$ is non-negative for~$1 \leq i,j \leq n$. Thus, $a_{ij} \!\cdot\! (1 \!-\! a_{ij}) \!=\! 0$ for all~$i$ and~$j$. It follows that~$a_{ij}$ must be either 0 or 1 for each~$i$ and~$j$. As~$A$ is doubly-stochastic and all its entries are either 0 or 1, by definition~$A$ is a permutation matrix.
\end{proof}

\convexity*
\begin{proof}
We begin by considering the first term in Equation~\eqref{relaxation2_v2}, \(\frac{1}{2} \| \tilde{A}P - P\tilde{B} \|^2\). The second derivative of this term is given by: $I \otimes \tilde{A}^2 - 2 \cdot (\tilde{B} \otimes \tilde{A}) + \tilde{B}^2 \otimes I$, where \(\otimes\) denotes the Kronecker product, and \(I\) represents the identity matrix. The second term in the equation is linear in the matrix \(P\), implying that its second derivative is zero. The second derivative of the third term is given by: $-2\lambda (I \otimes I)$.
Thus, the Hessian matrix of the entire function is:
\[
I \otimes \tilde{A}^2 - 2 \cdot (\tilde{B} \otimes \tilde{A}) + \tilde{B}^2 \otimes I - 2\lambda (I \otimes I).
\]
For the function to be convex, the Hessian must be positive semidefinite, which requires that its eigenvalues be non-negative. This leads to the condition:
\begin{equation}
    \small{\lambda \leq \frac{\lambda_{i}(\tilde{A})^2 + \lambda_{j}(\tilde{B})^2 - 2\lambda_{i}(\tilde{A})\lambda_{j}(\tilde{B})}{2} = \frac{(\lambda_{i}(\tilde{A}) - \lambda_{j}(\tilde{B}))^2}{2}},
\end{equation}
for all \(i, j \in \{1, 2, \dots, n\}\), where \(\lambda_{i}(\tilde{A})\) and \(\lambda_{j}(\tilde{B})\) are the eigenvalues of matrices \(\tilde{A}\) and \(\tilde{B}\), respectively.
\end{proof}

\subsection{Experiments}
\subsubsection{Hardware and Software environments}
\label{app:setup}
We ran all experiments on a machine equipped with an Intel Xeon Gold 6142 CPU @1GHz and a GeForce GTX 1080 Ti GPU. While heuristic methods including \name run on the CPU, supervised baselines exploit the GPU. 
\subsubsection{Datasets}\label{app:datasets}
The semantics of the datasets are as follows:
\vspace{-0.05in}
\begin{itemize}
\vspace{-0.05in}
    \item \textbf{\aids}~\cite{aids}: A compilation of graphs originating from the AIDS antiviral screen database, representing chemical compound structures.
    % \vspace{-0.02in}
    \item \textbf{OGBG-\molhiv} (\molhiv)~\cite{ogbg}: Chemical compound datasets of various sizes, where each graph represents a molecule. Nodes correspond to atoms, and edges represent chemical bonds. The atomic number of each atom serves as the node label.
    % \vspace{-0.02in}
    \item \textbf{OGBG-\code} (\code)~\cite{ogbg}: A collection of Abstract Syntax Trees (ASTs) derived from approximately 450,000 Python method definitions. Each node in the AST is assigned a label from a set of 97 labels. We considered the graphs as undirected.
    % \vspace{-0.02in}
    \item \textbf{Mutagenicity} (\mutag)~\cite{mutag}: A chemical compound dataset of drugs categorized into two classes: mutagenic and non-mutagenic.
    % \vspace{-0.02in}
    \item \textbf{\imdb}~\cite{imdb}: This dataset consists of ego-networks of actors and actresses who have appeared together in films. The graphs in this dataset are unlabelled.
    % \vspace{-0.02in}
    \item \textbf{\coil}~\cite{coildel}: This dataset comprises graphs extracted from images of various objects using the Harris corner detection algorithm. The resulting graphs are unlabelled.
    % \vspace{-0.02in}
    \item \textbf{Triangles}~\cite{triangles}: This is a synthetically generated dataset designed for the task of counting triangles within graphs. The graphs in this dataset are unlabelled.
\end{itemize}

\subsubsection{Parameters}\label{app:parameters}
Table~\ref{tab:parameters} lists the parameters used for \name. We set the convergence criterion of \textsc{M-Adam} to~$abs(prev\_dist - cur\_dist) < 1e^{-7}$, where~$prev\_dist, cur\_dist$ are the approximated Graph edit distances in two successive iterations, $itr - 1$ and~$itr$.

\begin{table}[!h]
% \vspace{-2mm}
\caption{Parameters used in \name.}\label{tab:parameters}
% \vspace{-3mm}
    \centering
    \begin{tabular}{p{3.6cm}p{3.6cm}}
    \toprule
    parameter & value\\
    \midrule
    $\mu$ & 1 \\
    $\alpha$ & 0.001 \\
    $\sigma_{th}$ & $1e^3$ \\
    \bottomrule
    \end{tabular}
\end{table}
\vspace{-3mm}
\definecolor{1st}{rgb}{0.8, 1, 0.5}
\definecolor{2nd}{rgb}{1.0, 0.9, 0.3}
\definecolor{3rd}{rgb}{0.9,1,0.7}

\begin{table*}[t] % we need !h for compression purposes
\caption{Accuracy Comparison among baselines for unit edit costs. Cells shaded in green denote the best performance in each dataset.}\label{tab:accuracy_uniform}
 \vspace{-2mm}
% \caption{RMSE on large datasets.}\label{tab:accuracy2}
% \vspace{-2mm}
\centering
\scalebox{0.9}{
    \begin{tabular}{p{2.5cm}|p{1.2cm}p{1.2cm}p{1.2cm}p{1.2cm}|p{1.2cm}p{1.2cm}p{1.2cm}p{1.2cm}} 
      \toprule
      & \multicolumn{4}{c|}{MAE} & \multicolumn{4}{c}{SI}\\
      \textbf{Methods} & \aids & \molhiv & \code & \mutag & \aids & \molhiv & \code & \mutag\\
      \midrule
      \eric & 0.57 & 0.66 & \cellcolor{1st}{0.56} & 0.65 & 0.00 & 0.00 & 0.00 & 0.00\\
      \egsc & 0.70 & 0.81 & 0.80 & 0.82 & 0.00 & 0.00 & 0.00 & 0.00\\
      \graphedx & 0.65 & 0.85 & 0.59 & 0.78 & 0.00 & 0.00 & 0.00 & 0.00\\
      \hmn & 0.86 & 0.94 & 0.84 & 0.89 & 0.00 & 0.00 & 0.00 & 0.00\\
      \gmn & 0.61 & 0.75 & 0.76 & 1.15 & 0.00 & 0.00 & 0.00 & 0.00\\
      \greed & 0.59 & 0.82 & 0.75 & 0.75 & 0.00 & 0.00 & 0.00 & 0.00\\
      \simgnn & 0.77 & 0.90 & 0.79 & 1.06 & 0.00 & 0.00 & 0.00 & 0.00\\
      \midrule
      \gedgnn & 1.19 & 2.16 & 1.50 & 1.89 & 0.00 & 0.00 & 0.00 & 0.00\\
      \gotsim & 3.36 & 5.20 & 9.76 & 4.74 & 0.00 & 0.00 & 0.00 & 0.00\\
      \gmsm & 7.34 & 13.04 & 10.01 & 13.32 & 0.00 & 0.00 & 0.00 & 0.00\\
      \midrule
      \btight & 4.13 & 4.98 & 6.79 & 7.05 & 0.02 & 0.02 & 0.02 & 0.01\\
      \adjip & 0.45 & 2.16 & 2.32 & 2.27 & 0.83 & 0.69 & 0.50 & 0.62\\
      \lpged & 2.6 & 5.48 & 2.82 & 5.39 & 0.48 & 0.13 & 0.14 & 0.05\\
      \compact & 1.49 & 4.17 & 3.93 & 4.07 & 0.75 & 0.27 & 0.01 & 0.18\\
      \ipfp & 2.81 & 5.19 & 2.85 & 4.97 & 0.08 & 0.02 & 0.14 & 0.02\\
      \midrule
      \name & \cellcolor{1st}{0.26} & \cellcolor{1st}{0.55} & 0.72 & \cellcolor{1st}{0.58} & \cellcolor{1st}{0.87} & \cellcolor{1st}{0.74} & \cellcolor{1st}{0.69} & \cellcolor{1st}{0.72}\\
      \bottomrule
    \end{tabular}}
    % \vspace{-0.1in}
\end{table*}

\subsubsection{Accuracy under Uniform Edit Cost Setting}\label{app:acc_uniform}  
Table~\ref{tab:accuracy_uniform} presents the approximation accuracy results in terms of MAE and SI on benchmark datasets under the uniform cost setting (Case 3). For MAE, \name outperforms all baselines on the \aids, \molhiv, and \mutag datasets, while on the \code dataset, \eric outperforms \name. In terms of SI, \name consistently surpasses all considered baselines. These results establish \name as a robust method capable of accurately estimating GED across diverse cost settings. \nips{The difficulty (i.e., MAE) increases as costs become more diverse (i.e., from uniform to non-uniform costs) and the size of the considered edit space expands (i.e., from zero to non-zero cost of substitution). We thus observe the lowest MAE in Setting 3, followed by Setting 1, and the highest MAE in Setting 2.}

\definecolor{1st}{rgb}{0.8, 1, 0.5}
\definecolor{2nd}{rgb}{1.0, 0.9, 0.3}
\definecolor{3rd}{rgb}{0.9,1,0.7}

\begin{table*}[t] % we need !h for compression purposes
\caption{Accuracy Comparison among baselines in terms of MAE under different cost settings for unlabelled datasets. Cells shaded in greendenote the best performance in each dataset.}\label{tab:accuracy_static_uniform_unlabelled}
 % \vspace{-2mm}
% \caption{RMSE on large datasets.}\label{tab:accuracy2}
% \vspace{-2mm}
\centering
\scalebox{0.9}{
    \begin{tabular}{p{2.6cm}|p{1.7cm}p{1.7cm}p{1.7cm}|p{1.7cm}p{1.7cm}p{1.7cm}} 
      \toprule
      & \multicolumn{3}{c|}{Cost Setting Case 1} & \multicolumn{3}{c}{Cost Setting Case 3}\\
      \textbf{Methods} & \imdb & \coil & \triangles & \imdb & \coil & \triangles\\
      \midrule
      \eric & 10.42 & 1.41 & 2.65 & 3.80 & 1.87 & 1.47\\
      \egsc & 5.96 & 3.23 & 3.80 & 6.50 & 3.89 & 2.82\\
      \graphedx & 7.10 & 1.41 & 2.26 & 1.46 & 1.21 & 0.50\\
      \hmn & 15.51 & 8.44 & 7.02 & 7.20 & 4.27 & 3.38\\
      \gmn & 4.75 & 2.93 & 3.41 & 1.37 & 0.89 & 0.63\\
      \greed & 5.02 & 2.90 & 3.39 & 1.39 & 0.88 & 0.73\\
      \simgnn & 7.58 & 2.00 & 2.36 & 3.73 & 1.04 & 0.97\\
      \midrule
      \gedgnn & 10.78 & 3.54 & 1.97 & 3.31 & 1.69 & 1.16\\
      \gotsim & 25.01 & 9.41 & 6.94 & 8.20 & 4.19 & 2.84\\
      \gmsm & 40.70 & 20.18 & 16.94 & 19.67 & 9.97 & 8.20\\
      \midrule
      \btight & 7.22 & 6.47 & 5.68 & 3.58 & 3.30 & 2.71\\
      \adjip & 1.58 & 0.71 & 0.40 & 1.22 & 0.23 & 0.30\\
      \lpged & 8.68 & 3.75 & 1.58 & 4.26 & 1.75 & 0.82\\
      \compact & 17.05 & 4.01 & 1.04 & 9.56 & 2.10 & 0.64\\
      \ipfp & 18.87 & 8.67 & 7.04 & 9.15 & 4.27 & 3.47\\
      \midrule
      \name & \cellcolor{1st}{1.02} & \cellcolor{1st}{0.43} & \cellcolor{1st}{0.21} & \cellcolor{1st}{0.15} & \cellcolor{1st}{0.21} & \cellcolor{1st}{0.17}\\
      \bottomrule
    \end{tabular}}
    \vspace{-0.1in}
\end{table*}

\subsubsection{Accuracy on Unlabelled Datasets}\label{app:unlabelled}
Table~\ref{tab:accuracy_static_uniform_unlabelled} presents the accuracy comparison of \imdb, \coil, and \triangles datasets in terms of MAE for cost setting Case 1 and Case 3. As these datasets are unlabelled, Case 2 is not applicable. \name consistently outperforms both supervised and heuristic baselines across all scenarios, demonstrating its robustness and effectiveness for GED prediction across diverse datasets.
\definecolor{1st}{rgb}{0.8, 1, 0.5}
\definecolor{2nd}{rgb}{1.0, 0.9, 0.3}
\definecolor{3rd}{rgb}{0.9,1,0.7}

\begin{table*}[t] % we need !h for compression purposes
\caption{Accuracy comparison among baselines in terms of SI under different cost settings for graphs of sizes~$[25,50]$. Cells shaded in green  denote the best performance in each dataset.}\label{tab:si_unseen}
 \vspace{-2mm}
% \caption{RMSE on large datasets.}\label{tab:accuracy2}
% \vspace{-2mm}
\centering
\scalebox{0.9}{
    \begin{tabular}{p{2.5cm}|p{1.2cm}p{1.2cm}p{1.2cm}p{1.2cm}|p{1.2cm}p{1.2cm}p{1.2cm}p{1.2cm}} 
      \toprule
      & \multicolumn{4}{c|}{Cost Setting Case 1} & \multicolumn{4}{c}{Cost Setting Case 2}\\
      \textbf{Methods} & \aids & \molhiv & \code & \mutag & \aids & \molhiv & \code & \mutag\\
      \midrule
      \btight & 0.12 & 0.03 & 0.05 & 0.09 & 0.01 & 0.05 & 0.04 & 0.04  \\
      \adjip & 0.18 & 0.03 & 0.09 & 0.10 & 0.25 & 0.08 & 0.13 & 0.10  \\
      \lpged & 0.04 & 0.00 & 0.01 & 0.04 & 0.03 & 0.04 & 0.10 & 0.03 \\
      \compact & 0.00 & 0.00 & 0.00 & 0.01 & 0.05 & 0.05 & 0.05 & 0.04  \\
      \ipfp & 0.01 & 0.01 & 0.00 & 0.01 & 0.00 & 0.00 & 0.01 & 0.00\\
      \midrule
      \name & \cellcolor{1st}{0.35} & \cellcolor{1st}{0.31} & \cellcolor{1st}{0.30} & \cellcolor{1st}{0.46} & \cellcolor{1st}{0.36} & \cellcolor{1st}{0.26} & \cellcolor{1st}{0.18} & \cellcolor{1st}{0.16} \\
      \bottomrule
    \end{tabular}}
    \vspace{-0.1in}
\end{table*}

\subsubsection{SI on Large Graphs}\label{app:si_unseen}  
Table~\ref{tab:si} presents a comparison of \name with other baselines in terms of the Strict Interpretability (SI) metric for graphs of sizes $[25, 50]$. \name consistently achieves significantly higher SI scores compared to other heuristic methods. These superior SI scores on large graphs highlight \name's enhanced scalability in delivering interpretable GED, outperforming other non-neural methods. 

\subsubsection{Carbon Emissions}
\vspace{-0.05in}
\begin{table}[b]
    \centering
    % \vspace{-0.2in}
    \caption{Total Carbon Emissions (in grams of $CO_{2}$).}
    % \vspace{-2mm}
    \scalebox{0.9}{
    \begin{tabular}{ccccc}
        \toprule
        \textbf{Model} & \aids & \molhiv & \code & \mutag \\
        \midrule
        \eric & 75.56 & 204.22 & 71.42 & 222.82 \\
        \egsc & 78.43 & 215.27 & 73.23 & 223.85 \\
        \graphedx & 410.65 & 612.09 & 426.40 & 251.55 \\
        \hmn & 437.91 & 442.55 & 123.21 & 277.00 \\
        \name & 6.06 & 7.11 & 8.11 &  7.28\\
        \bottomrule
    \end{tabular}}
    \label{tab:total_carbon_emissions}
\end{table}
Table~\ref{tab:total_carbon_emissions} presents the total carbon emissions for the top-performing models across various datasets. \name was executed on a CPU, which operates at a power consumption of approximately 150 watts under full load. In contrast, all other neural models utilized a GPU, which consumes approximately 250 watts under full load. \nips{Our carbon emission estimation follows a standard methodology:  
\begin{align*}  
\text{Energy Consumption} &= \text{Power (kW)} \times \text{Time (hours)} \\  
\text{CO}_2 \text{ Emissions} &= \text{Energy Consumption} \times 475 \text{ gCO}_2\text{/kWh}  
\end{align*}  
The emission factor of 475 gCO$_2$/kWh is sourced from~\citet{iea2019status}}. The carbon emissions account for the time taken to generate ground truth, training, and inference for the neural models, whereas \name, being optimization-based, only includes inference time. \nips{While we acknowledge that training and ground-truth computation costs would be amortized over many inferences, it is reasonable to include those costs for any model that requires them.} \name demonstrates significantly lower carbon emissions compared to the supervised methods, achieving up to 30 times lower emissions on the \molhiv dataset.

\begin{table}[t]
\caption{Running times (MM:SS) on benchmark datasets.}\label{tab:runtimes_unsup}
% \caption{\texttt{ogbg-molhiv}}\label{tab:molhiv_runtimes}
% \vspace{-2mm}
\centering
\scalebox{0.95}{
    \begin{tabular}{p{2.5cm}p{1.2cm}p{1.3cm}p{1.2cm}p{1.3cm}p{1.3cm}p{1.7cm}p{1.3cm}}
        \toprule
        \textbf{Methods} & \aids & \molhiv & \code & \mutag & \imdb & \coil & \triangles\\
        \midrule
        \name & 05:06 & 05:59 & 06:50 & 06:05 & 05:09 & 05:06 & 04:59\\
        \btight & 00:24 & 00:48 & 03:52 & 01:17 & 00:18 & 00:07 & 00:13\\
        \adjip & 02:38 & 06:34 & 09:33 & 07:12 & 05:42 & 02:06 & 01:29\\
        \ipfp & 00:20 & 00:45 & 01:40 & 01:20 & 00:15 & 00:05 & 00:09\\
        \compact & 10:02 & 11:47 & 12:53 & 12:16 & 07:35 & 07:38 & 05:01\\
        \lpged & 08:04 & 11:01 & 11:52 & 10:51 & 09:56 & 08:13 & 05:11\\
        \bottomrule
    \end{tabular}
}
\end{table}
\subsubsection{Efficiency}  
Table~\ref{tab:runtimes_unsup} presents the running time of optimization based heuristic methods on the benchmark datasets for the entire test set. Among these methods, \ipfp and \btight demonstrates the fastest runtimes but exhibits the poorest accuracy among all 15 baselines in Table~\ref{tab:accuracy_static_non_uniform} across datasets and cost settings. Excluding \btight and \ipfp, \name achieves superior runtime performance compared to other optimization based methods on the \molhiv, \code, \mutag and \imdb datasets. On the \aids, \coil and \triangles datasets, \adjip demonstrates better run times, and \name is second best. Importantly, \name achieves a significant accuracy advantage while maintaining competitive efficiency, reinforcing its position as both an effective and efficient solution for GED approximation.

\textbf{Time Complexity Analysis:} The objective function (Eq.~\eqref{relaxation2_v2}) includes matrix multiplications with a worst-case time complexity of $\mathcal{O}(n^3)$. Gradient calculations also have a worst-case complexity of $\mathcal{O}(n^3)$ due to matrix multiplications. Thus, the overall time complexity becomes $\mathcal{O}(T \cdot n^3)$, where $T$ is the number of computation epochs. Additionally, as the algorithm is CPU-bound, GED computations for each graph pair can be massively parallelized by leveraging multi-core CPUs and hyperthreading.

\iclr{\textbf{Impact of Time Budgets:} As certain heuristic baselines employ time constraints, we retained their default parameter settings to ensure consistency. To examine how performance varies with increased computational budget, we conducted an analysis on the \code dataset under \textbf{Cost Setting~1}, using time budgets of 5, 10, and 15 minutes. The results are presented in Table~\ref{tab:timebudget}.

\begin{table}[!h]
\centering
\caption{GED estimation error (MAE) on \code under varying time budgets (minutes).}
\label{tab:timebudget}
\begin{tabular}{lccc}
\toprule
Method        & 5 min & 10 min & 15 min \\
\midrule
\btight  & 13.91 & 13.87 & 13.88 \\
\adjip         & 6.98  & 5.05  & 3.96  \\
\compact   & 24.14 & 8.40  & 6.10  \\
\lpged            & 16.31 & 6.28  & 7.72  \\
\ipfp          & 6.44  & 6.47  & 6.39  \\
\bottomrule
\end{tabular}
\end{table}

Branch-Tight and IPFP converged within 5 minutes, as evidenced by the absence of any improvement in MAE with larger time budgets. The remaining three methods exhibited modest gains when given additional time, suggesting that they benefit from prolonged optimization. Still, Eugene achieves a MAE of $0.75$ within 7 minutes, outperforming all baselines even at the maximum allotted time.}

\definecolor{2nd}{rgb}{0.7,1,0.5}
\definecolor{1st}{rgb}{0.3,1,0}
\definecolor{3rd}{rgb}{0.9,1,0.7}

% \definecolor{3rd}{cmyk}{0, 0, 0, 0.2}
% \definecolor{2nd}{cmyk}{0, 0, 0, 0.4}
% \definecolor{1st}{cmyk}{0, 0, 0, 0.6}

 \begin{table}[!h]
 %\vspace{-0.1in}
\caption{\scalebox{1}{Accuracy (MAE) of \name vs. \name'.}}\label{tab:ablation}
\centering
\scalebox{0.95}{
    \begin{tabular}{p{1.3cm}|p{1.3cm}p{1.3cm}|p{1.3cm}p{1.3cm}} 
      \toprule
      & \multicolumn{2}{c|}{Cost Setting Case 1} & \multicolumn{2}{c}{Cost Setting Case 2}\\
      \textbf{Datasets} & \name & \name' & \name & \name'\\
      \midrule
      \aids & \textbf{0.33} & 10.51 & \textbf{0.58} & 9.22\\
      \molhiv & \textbf{0.65} & 9.96 & \textbf{0.79} & 11.63\\
      \code & \textbf{0.75} & 13.46 & \textbf{0.58} & 6.04\\
      \mutag & \textbf{0.68} & 19.12 & \textbf{1.01} &16.10 \\
      \bottomrule
    \end{tabular}}
    \vspace{-0.1in}
 \end{table}

  \begin{table}[!h]
 %\vspace{-0.1in}
\caption{\scalebox{1}{Accuracy (MAE) of \name vs. \name-NoIR.}}\label{tab:ablation2}
\centering
\scalebox{0.95}{
    \begin{tabular}{p{1.3cm}|p{1.3cm}p{2.3cm}|p{1.3cm}p{2.3cm}} 
      \toprule
      & \multicolumn{2}{c|}{Cost Setting Case 1} & \multicolumn{2}{c}{Cost Setting Case 2}\\
      \textbf{Datasets} & \name & \name-NoIR & \name & \name-NoIR\\
      \midrule
      \aids & \textbf{0.33} & 0.80 & \textbf{0.58} & 1.15\\
      \molhiv & \textbf{0.65} & 1.16 & \textbf{0.79} & 1.57\\
      \code & \textbf{0.75} & 1.19 & \textbf{0.58} & 1.02\\
      \mutag & \textbf{0.68} & 1.14 & \textbf{1.01} & 1.53\\
      \bottomrule
    \end{tabular}}
    \vspace{-0.1in}
 \end{table}
\definecolor{1st}{rgb}{0.8, 1, 0.5}
\begin{table}[!h] % we need !h for compression purposes
\caption{Accuracy comparison of \name with \name-\textsc{FW} in Cost Setting 1}\label{tab:fw}
\centering
\scalebox{0.9}{
    \begin{tabular}{p{2.6cm}|p{1.6cm}p{1.6cm}p{1.6cm}p{1.6cm}} 
      \toprule
      \textbf{Methods} & \aids & \molhiv & \code & \mutag\\
      \name-\textsc{FW} & 6.67 & 11.79 & 6.59 & 13.09\\
      \name & \cellcolor{1st}{0.33} & \cellcolor{1st}{0.65} & \cellcolor{1st}{0.75} & \cellcolor{1st}{0.68} \\
      \bottomrule
    \end{tabular}}
\end{table}
\subsubsection{Ablation Study}\label{app:ablation}
We have so far evaluated \name, which refines a doubly stochastic matrix toward a quasi-permutation matrix using a permutation-inducing regularizer before rounding. For comparison, we introduce a variant, \name', which \emph{directly rounds} the doubly stochastic solution without this regularization. As shown in Table~\ref{tab:ablation}, \name yields substantially lower MAE, highlighting the benefit of guiding the solution closer to a permutation before rounding.

\iclr{We also assess the impact of the inverse relabelling strategy of \madam, which recenters the problem after each iteration. To this end, we define a variant, \name-NoIR, that omits this transformation. Table~\ref{tab:ablation2} reports MAE for both variants: \name consistently outperforms \name-NoIR, demonstrating the importance of performing gradient updates in coordinates aligned with the identity.}

\nips{We also investigate the effect of using the Frank-Wolfe (FW) algorithm in place of Adam within Algorithm~\ref{M-Adam}. As shown in Table~\ref{tab:fw}, the M-Adam variant significantly outperforms the version that employs FW (\name-FW), demonstrating the effectiveness of our optimizer choice.}

\definecolor{1st}{rgb}{0.8, 1, 0.5}
\begin{table*}[t] % we need !h for compression purposes
\caption{Accuracy comparision with varying $\mu$ }\label{tab:param_mu}
 \vspace{-2mm}
\centering
\scalebox{0.9}{
    \begin{tabular}{p{1cm}|p{1.6cm}p{1.6cm}p{1.6cm}p{1.6cm}} 
      \toprule
      \textbf{$\mu$} & \aids & \molhiv & \code & \mutag\\
      \midrule
      0.1 & 3.07 & 9.13 & 4.97 & 5.31\\
      0.2 & 2.29 & 7.5 & 2.96 & 3.98\\
      0.5 & 0.9 & 3.39 & 0.9 & 1.36\\
      1 & \cellcolor{1st}{0.58} & \cellcolor{1st}{0.79} & \cellcolor{1st}{0.58} & \cellcolor{1st}{1.01}\\
      2 & 0.85 & 1.14 & 0.84 & 1.69\\
      \bottomrule
    \end{tabular}}
    % \vspace{-0.1in}
\end{table*}

\begin{table*}[t] % we need !h for compression purposes
\caption{Accuracy comparision with varying $\alpha$}\label{tab:param_alpha}
 \vspace{-2mm}
\centering
\scalebox{0.9}{
    \begin{tabular}{p{1cm}|p{1.6cm}p{1.6cm}p{1.6cm}p{1.6cm}} 
      \toprule
      \textbf{$\alpha$} & \aids & \molhiv & \code & \mutag\\
      \midrule
      0.1 & 0.61 & 0.82 & 0.62 & \cellcolor{1st}{0.98}\\
      0.01 & 0.58 & 0.81 & 0.61 & 1.02\\
      0.001 & \cellcolor{1st}{0.58} & \cellcolor{1st}{0.79} & \cellcolor{1st}{0.58} & 1.01\\
      \bottomrule
    \end{tabular}}
    % \vspace{-0.1in}
\end{table*}
\subsubsection{Parameter Sensitivity}\label{app:param_sensitivity}
\nips{We analyze the sensitivity of the M-Adam algorithm to the parameters listed in Appendix~\ref{app:parameters}, as shown in Tables~\ref{tab:param_mu} and~\ref{tab:param_alpha}. A lower value of $\mu$ increases the weight of edge costs, whereas a higher $\mu$ prioritizes node costs. Across all datasets, $\mu = 1$ yields the best performance. We use $\alpha = 0.001$ (the default value for Adam), which performs best on three out of four datasets.}

\iclr{To examine the impact of the $\lambda$-scheduling in \madam, we conducted experiment where the increment step was varied, results are presented in Table~\ref{tab:lambda_schedule}

\begin{table}[!h]
\centering
\caption{Effect of varying $\lambda$-increment step on GED estimation error (MAE).}
\label{tab:lambda_schedule}
\begin{tabular}{lcccc}
\toprule
Increment step & AIDS & molhiv & code2 & Mutag \\
\midrule
0.1 & 1.45 & 2.08 & 1.40 & 2.10 \\
0.5 & \textbf{0.33} & \textbf{0.65} & \textbf{0.75} & \textbf{0.68} \\
1   & 0.80 & 1.54 & 2.77 & 1.85 \\
2   & 3.19 & 6.18 & 10.01 & 9.88 \\
\bottomrule
\end{tabular}
\end{table}

\begin{itemize}
    \item \textbf{Increment = 0.1:} The influence of permutation constraints remained weak throughout optimization, leading to under-constrained solutions and suboptimal performance.
    \item \textbf{Increment = 0.5:} This yielded the best results, striking a balance between exploration and constraint enforcement, and was adopted as the default setting in Eugene.
    \item \textbf{Increment = 1, 2:} The optimizer rapidly enforced hard permutation constraints, prematurely narrowing the search space and degrading solution quality.
\end{itemize}

These results emphasize the importance of a carefully tuned $\lambda$-schedule in achieving both accuracy and stability in GED estimation.}

\begin{figure*}[!h]
% \vspace{-2mm}
\centering
\subfloat[\name]{\includegraphics[width =1.4in]{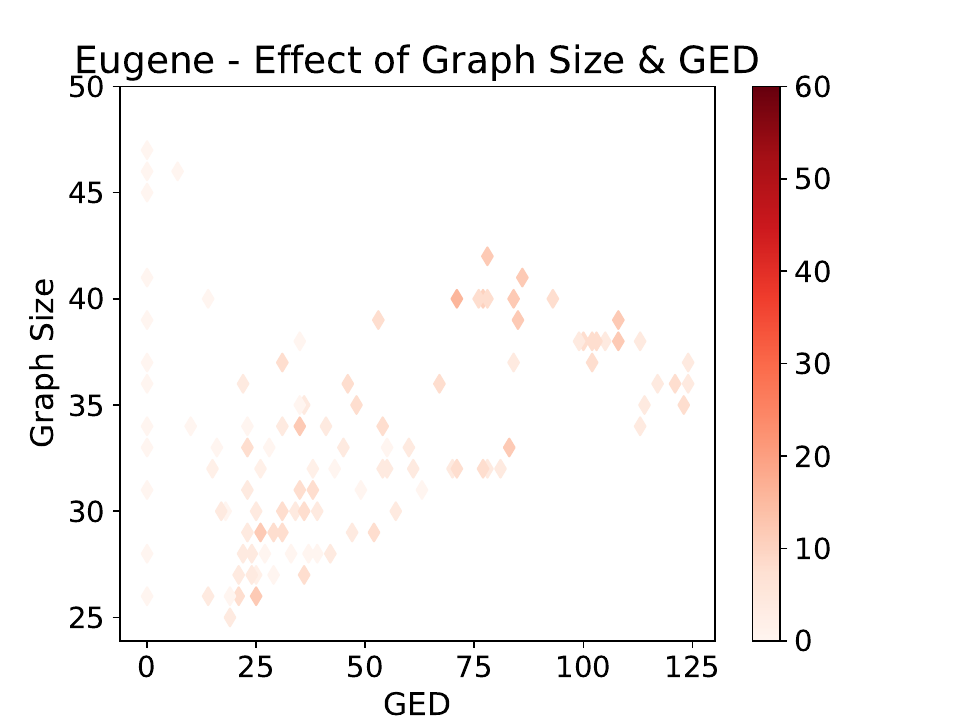}}
\subfloat[\egsc]{\includegraphics[width=1.4in]{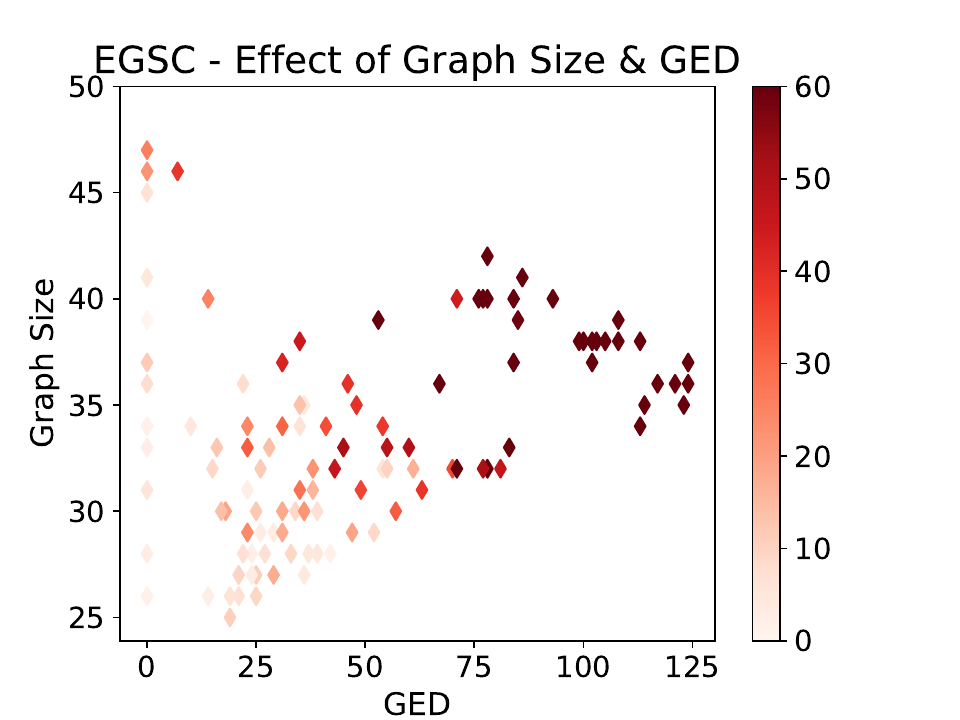}}
\subfloat[\hmn]{\includegraphics[width =1.4in]{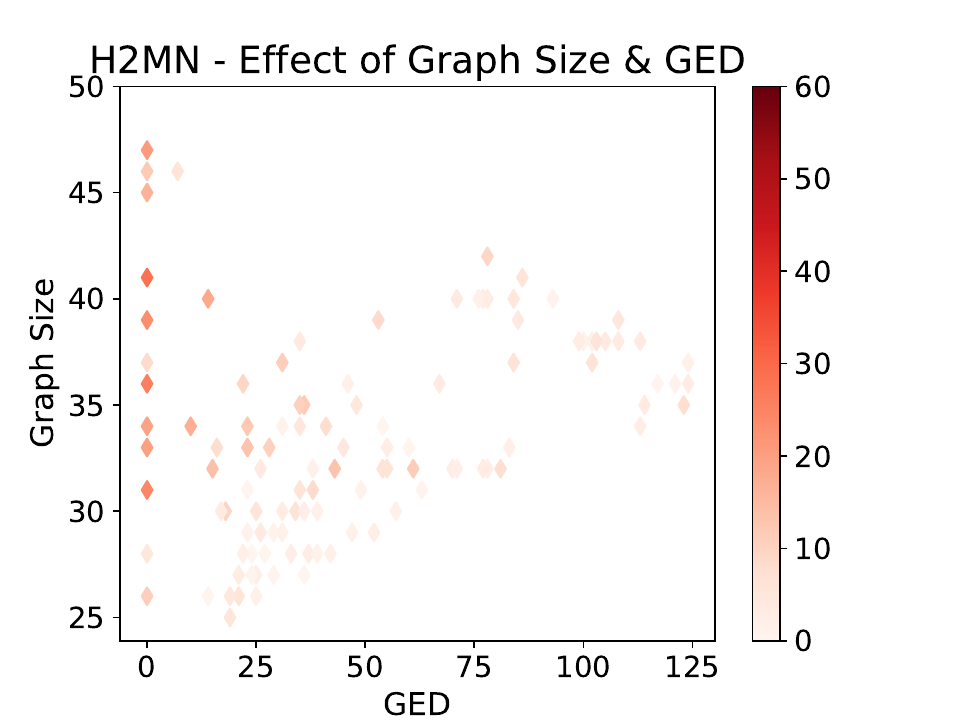}}
\subfloat[\graphedx]{\includegraphics[width =1.4in]{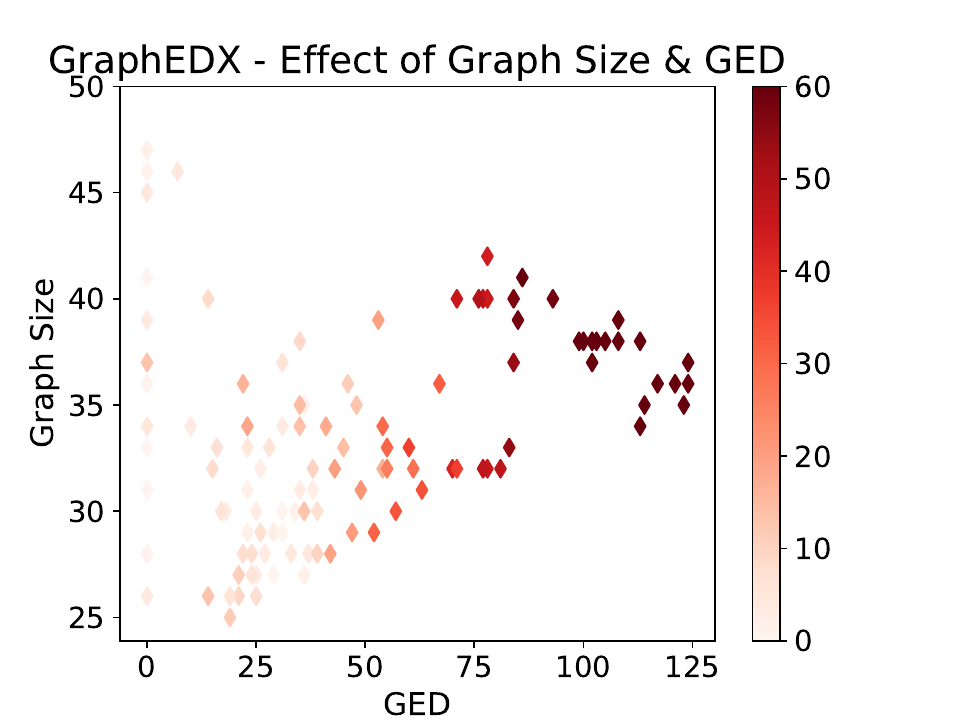}}
% \vspace{-3mm}
\caption{MAE heatmap vs. graph size \& GED for \aids for graphs of size $[25, 50]$.}\label{fig:heatmap_aids}
\vspace{-4mm}
\end{figure*}
\begin{figure*}[!h]
% \vspace{-2mm}
\centering
\subfloat[\name]{\includegraphics[width =1.4in]{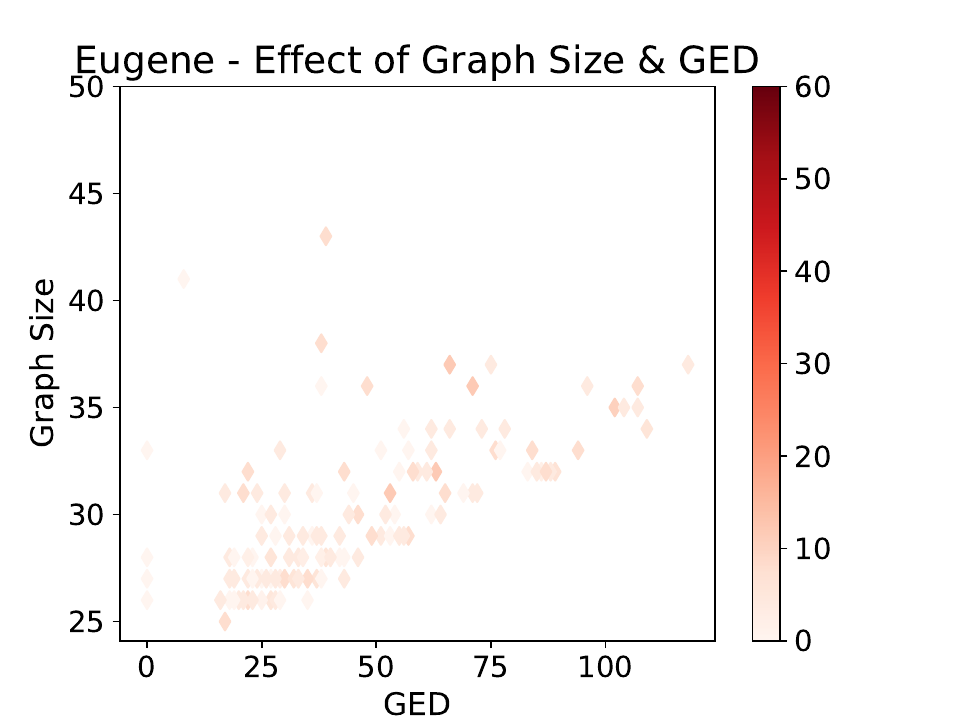}}
\subfloat[\egsc]{\includegraphics[width=1.4in]{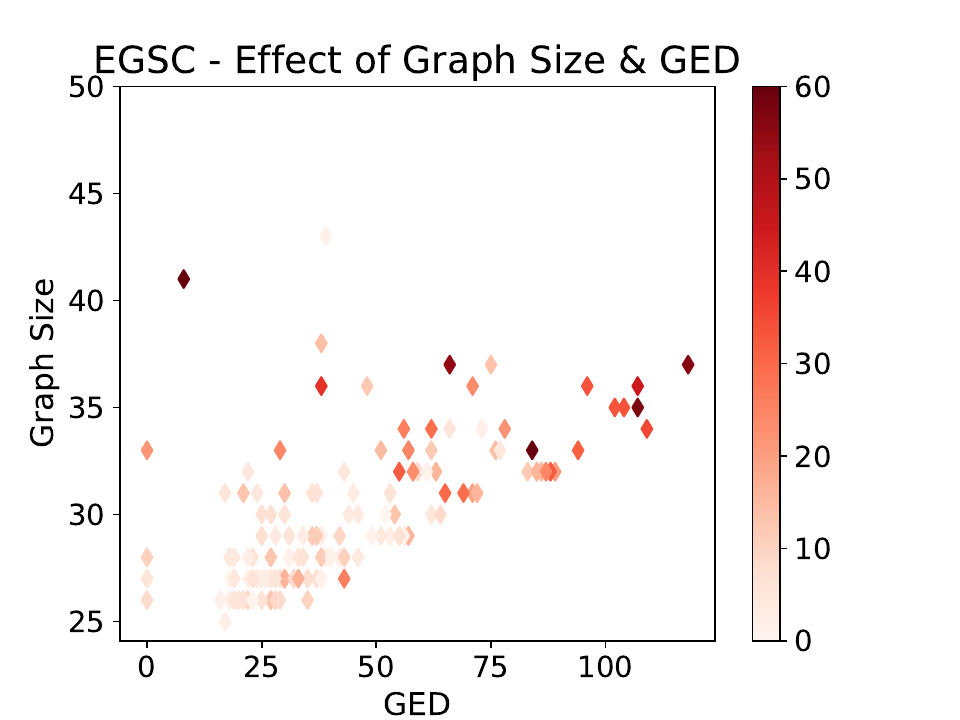}}
\subfloat[\hmn]{\includegraphics[width =1.4in]{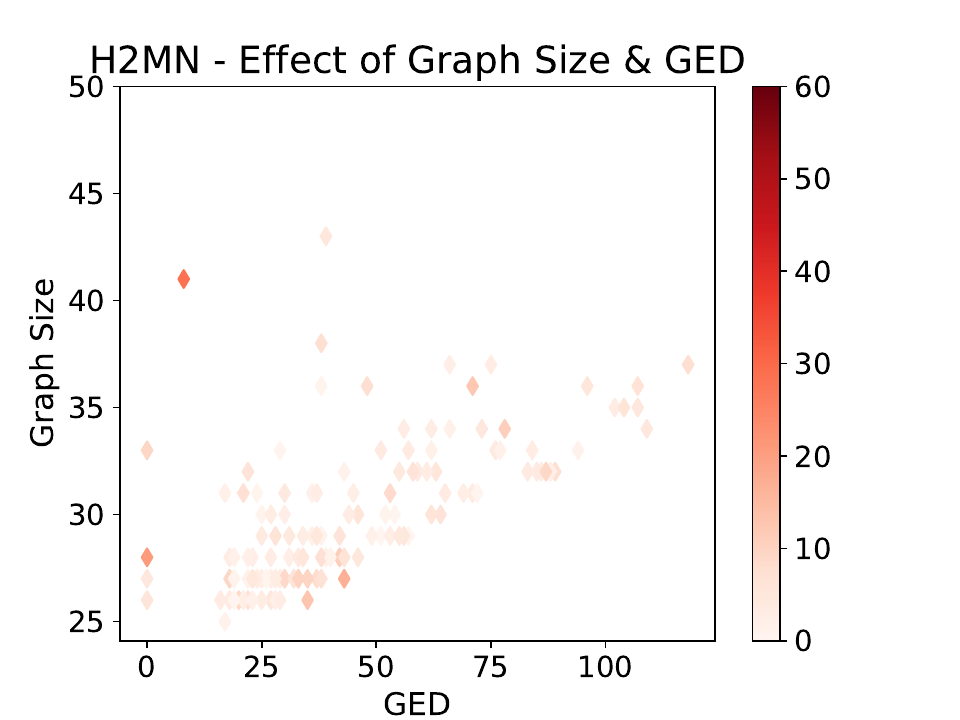}}
\subfloat[\graphedx]{\includegraphics[width =1.4in]{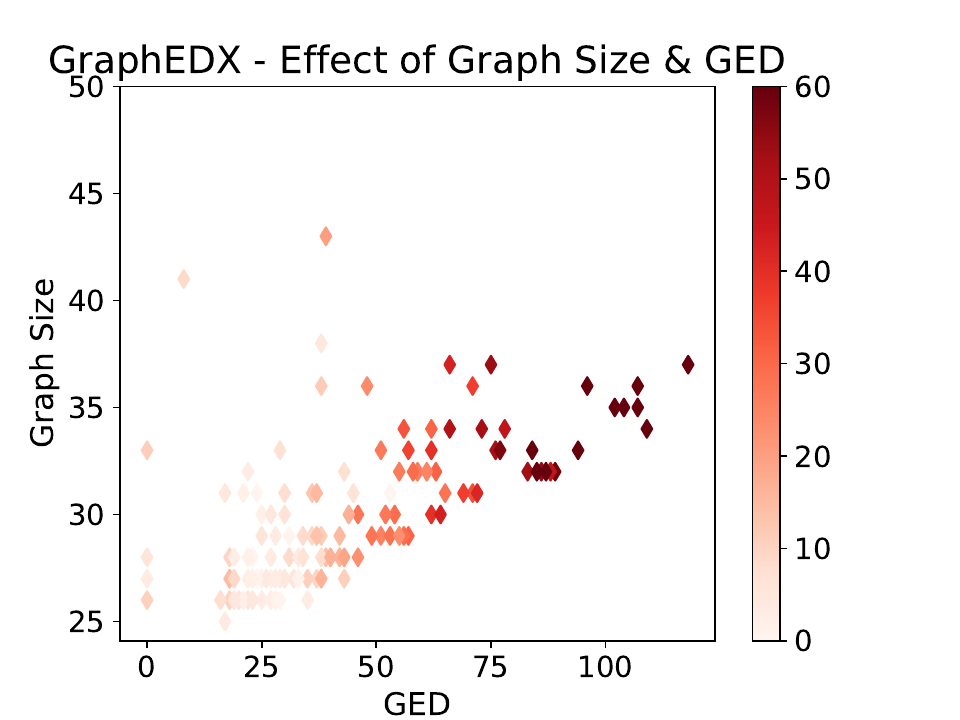}}
\vspace{-3mm}
\caption{MAE heatmap vs. graph size \& GED for \molhiv for graphs of size $[25, 50]$.}\label{fig:heatmap_molhiv}
\vspace{-4mm}
\end{figure*}
\begin{figure*}[!h]
% \vspace{-2mm}
\centering
\subfloat[\name]{\includegraphics[width =1.4in]{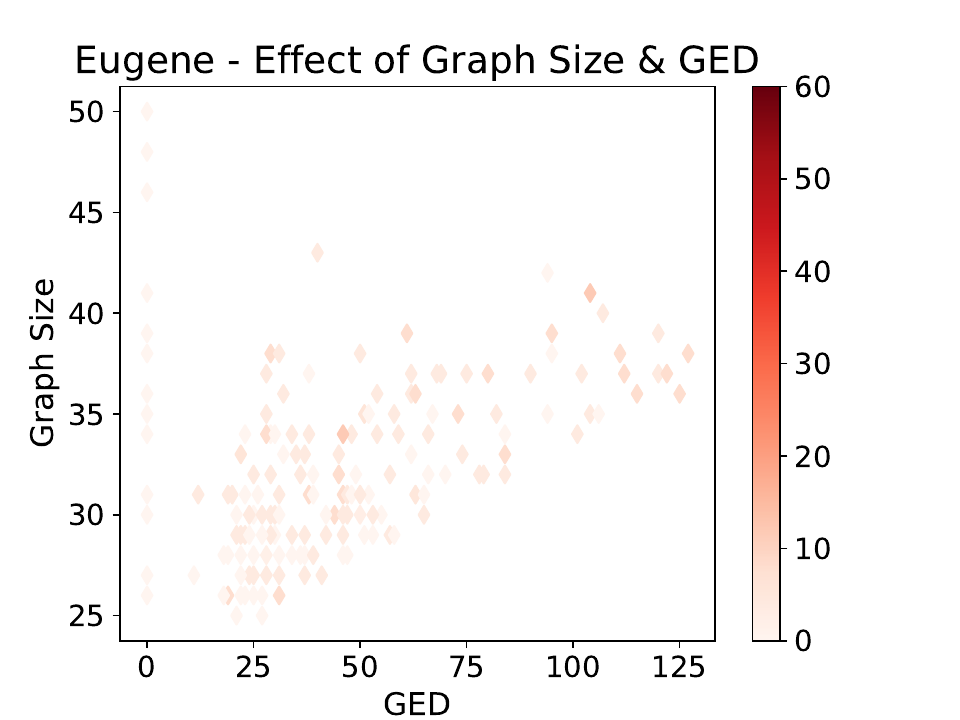}}
\subfloat[\egsc]{\includegraphics[width=1.4in]{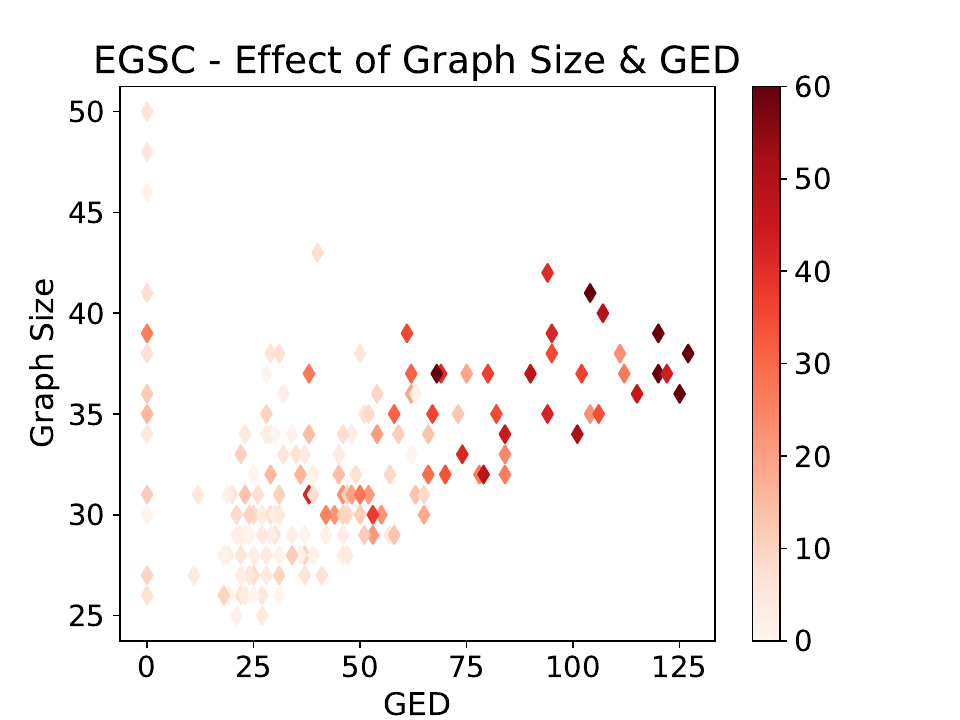}}
\subfloat[\hmn]{\includegraphics[width =1.4in]{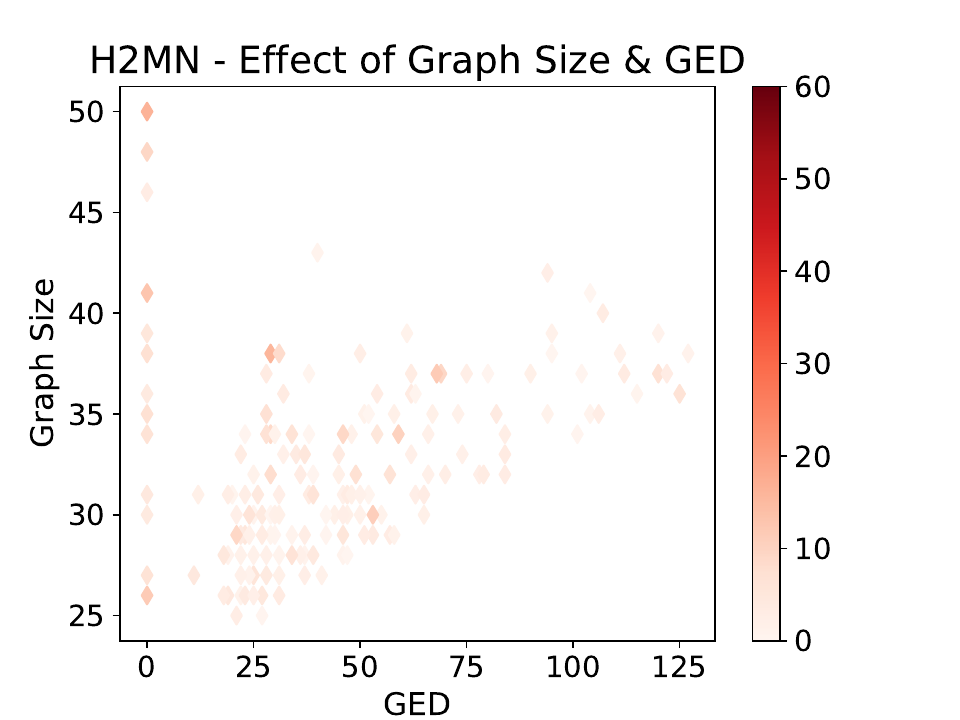}}
\subfloat[\graphedx]{\includegraphics[width =1.4in]{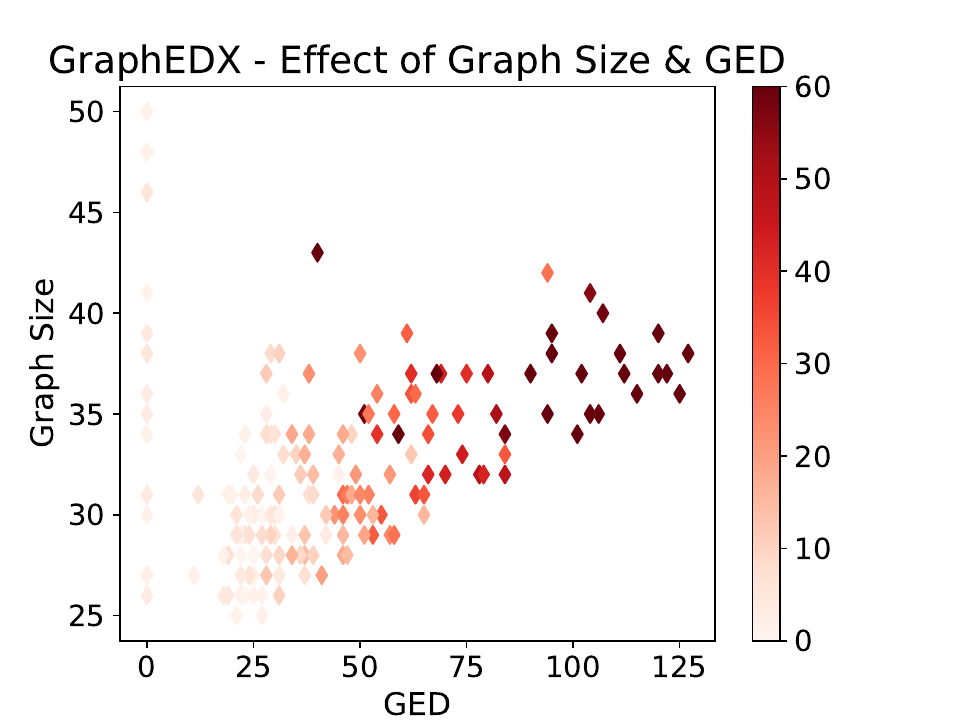}}
% \vspace{-3mm}
\caption{MAE heatmap vs. graph size \& GED for \mutag for graphs of size $[25, 50]$.}\label{fig:heatmap_mutag}
% \vspace{-4mm}
\end{figure*}
\subsubsection{Impact of Graph Size and GED}\label{app:heatmaps}
Section~\ref{sec:large_acc} presented heatmaps of MAE vs. graph size and true GED value on the \code dataset. Heatmaps for the \aids, \molhiv, and \mutag datasets are provided in Figs.~\ref{fig:heatmap_aids}-~\ref{fig:heatmap_mutag}. The conclusions remain consistent: \graphedx, \egsc, and \hmn exhibit noticeably darker tones across the spectrum compared to \name, highlighting \name's superior scalability with respect to GED and graph sizes across datasets.

\nips{\subsubsection{Accuracy on Very Large Graphs}\label{app:vlarge_graphs}
As detailed in ~\cite{btight}, GED methods are traditionally applied to small-scale graphs due to computational complexity. We extend the feasibility of GED approximation to substantially larger graphs. We present results on two unlabelled thousand-scale collaboration network datasets, Netscience ($|V|=379$, $|E|=914$) and HighSchool ($|V|=327$, $|E|=5818$) in Table~\ref{tab:vlarge_graphs}. To our knowledge, no prior GED approximation benchmark handles graphs of this scale. On HighSchool, a dense evolving dataset, we compute the GED of the last graph version from versions containing 80\%, 85\%, 90\%, and 99\% of edges. On NetScience, we create five graphs by introducing small noise to the original graph. Since it's not feasible to create a training set with exact ground-truth GED for such large graphs, we excluded neural models from our analysis. \ipfp didn't terminate within a time limit of 3 hrs. Results clearly indicate superior scalabilty of \name both in terms of MAE and running times.}
\definecolor{1st}{rgb}{0.8, 1, 0.5}
\begin{table}[H] % we need !h for compression purposes
\caption{Performance comparison on HighSchool and NetScience Datasets}\label{tab:vlarge_graphs}
 % \vspace{-2mm}
\centering
\scalebox{0.9}{
    \begin{tabular}{p{3.4cm}|p{2cm}p{2cm}|p{2cm}p{2cm}} 
      \toprule
      & \multicolumn{2}{c|}{MAE} & \multicolumn{2}{c}{Running Time (sec)}\\
      \textbf{Methods} & HighSchool & NetScience & HighSchool & NetScience\\
      \midrule
      \adjip & 4568 & 152.99 & 2695 & 1446 \\
      \btight & 582 & 833 & 1115 & 2369 \\
      \lpged & 5032 & 859.4 & 1912 & 1526 \\
      \midrule
      \name & \cellcolor{1st}{0} & \cellcolor{1st}{22.8} & \cellcolor{1st}{961} & \cellcolor{1st}{1372} \\
      \bottomrule
    \end{tabular}}
    % \vspace{-0.1in}
\end{table}

\iclr{\subsubsection{Illustrative Example with domain-specific edit costs}\label{app:domaincosts}
To model meaningful structural similarity, we design edit costs with domain-specific heuristics from chemistry for \molhiv dataset.

\paragraph{Node Substitution Cost.} Substituting one atom for another alters a molecule’s electronic properties, reactivity, and biological function. To account for these effects, node substitution costs are assigned based on the electronegativity difference between atoms:
\begin{itemize}
    \item \textbf{Low Cost (1):} Applied when the electronegativity difference is less than $0.2$. These substitutions typically involve chemically similar atoms that frequently co-occur in analogous functional groups.
    \item \textbf{Moderate Cost (2):} Assigned when the difference lies in $[0.2, 0.7]$, indicating moderate chemical dissimilarity.
    \item \textbf{High Cost (3):} Used when the difference exceeds $0.7$, reflecting substitutions likely to disrupt molecular structure and activity.
\end{itemize}

\paragraph{Node Insertion / Deletion Cost.} The cost of inserting or deleting a node is determined by the bond multiplicity of the associated atom:
\begin{itemize}
    \item \textbf{Cost = 3:} Atom participates in at least one triple bond.
    \item \textbf{Cost = 2:} Atom participates in at least one double bond but no triple bond.
    \item \textbf{Cost = 1:} Atom is involved only in single bonds.
\end{itemize}
This hierarchy reflects the increasing structural and energetic disruption when removing atoms from more rigid bonding environments.

\paragraph{Edge Insertion / Deletion Cost.} Edge insertion and deletion costs are set uniformly to~\textbf{1}.

\begin{table}[!h]
\centering
\caption{GED estimation error (MAE) on \molhiv under chemistry-informed edit costs.}
\label{tab:chem_editcost}
\begin{tabular}{lccccc}
\toprule
Method & \name & \eric & \egsc & \graphedx & \greed \\
\midrule
\molhiv & \textbf{1.30} & 2.09 & 1.94 & 1.74 & 2.58 \\
\bottomrule
\end{tabular}
\end{table}

We presented the results in Table~\ref{tab:chem_editcost}. Eugene outperforms competing baselines under the proposed chemistry-informed edit cost setting, demonstrating its ability to effectively capture real-world molecular similarity.
}

\nips{\subsubsection{Illustrative Example of \name’s Pipeline}
\label{app:domainedit}
We considered two graphs of sizes 12, 11 respectively and show four stages of \name’s operation: \begin{enumerate*}[label=(\roman*)] \item the initial mapping; \item the doubly stochastic matrix generated after the first iteration of Algorithm~\ref{M-Adam} ($\lambda$ = 0); \item the quasi-permutation matrix at the end of third iteration of Algorithm~\ref{M-Adam}; \item The final mapping returned by \name \end{enumerate*}. The optimal transformation from Graph 1 to Graph 2 involves removing node 10, removing the edge from node 1 to node 4, and adding an edge from node 5 to node 9 in Graph 1. As the figure shows, by the third iteration, our novel regularizer has turned the doubly stochastic matrix to a sparse one. At the end, the algorithm achieves the \emph{optimal} node alignment. After iteration 3, nodes 6 and 7 of Graph 1 have similar weightage for nodes 5 and 6 of Graph 2, as these nodes share similar structural neighborhoods. Node 10 is mapped to node 11, which is a dummy node in Graph 2, indicating that it should be deleted.}
\begin{figure}[H]
\centering
% \vspace{0.10in} % do not remove space on top of column
\includegraphics[width=6in]{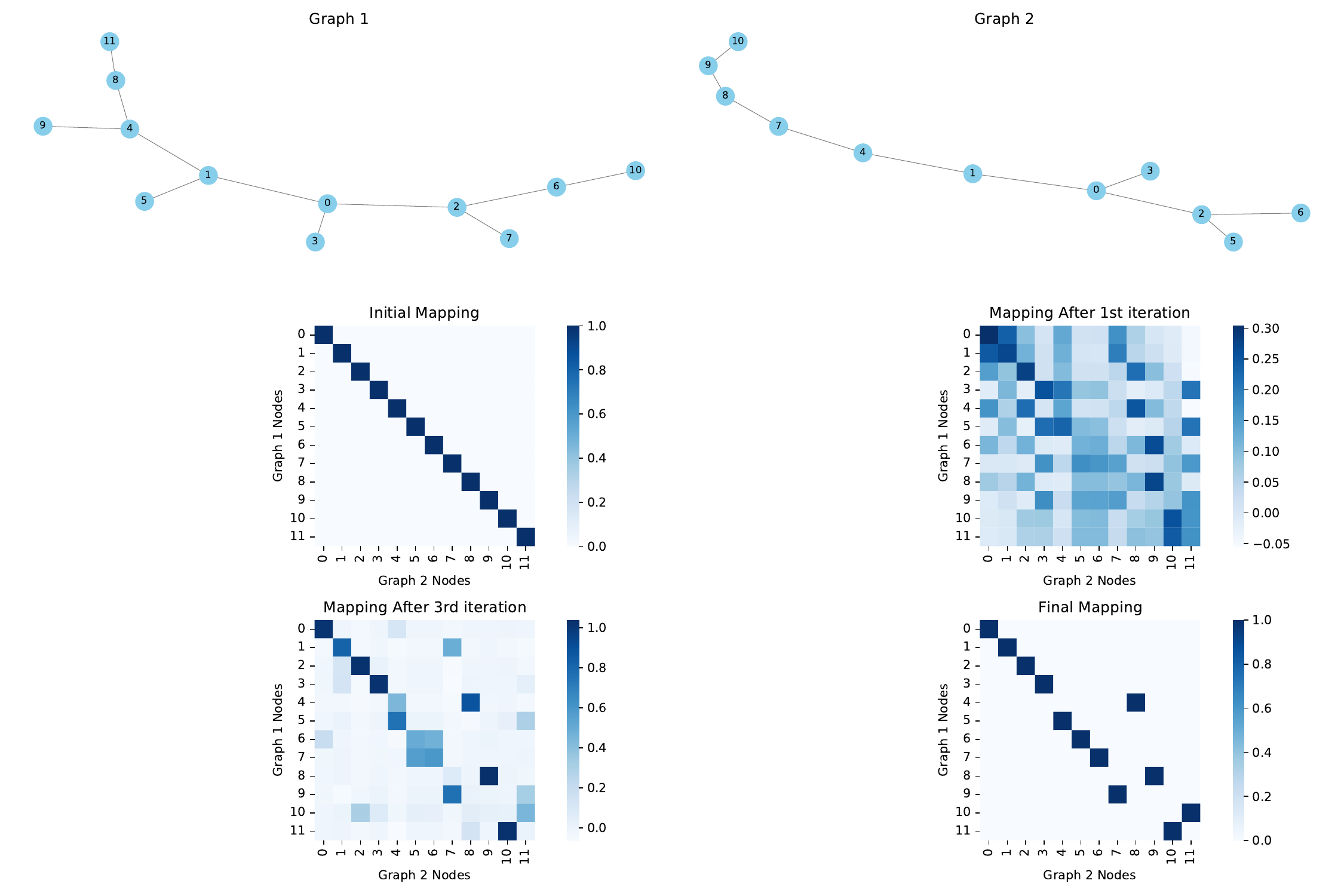}
\caption{Operational stages of \name}\label{fig:toy_example}
\end{figure}

%\input{sections/checklist}
% \clearpage
% \input{sections/icml_rebuttal}
%\clearpage
%\section{Appendix}
%\input{sections/00.appendix}
% \input{resubmission_report}
\end{document}